\DeclareMathOperator{\vecop}{vec}
\DeclareMathOperator*{\argmin}{arg\,min}
\newtheorem{theorem}{Theorem}[section]
\newtheorem{lemma}[theorem]{Lemma}
\newcommand{\eat}[1]{}
\title{Model Agnostic Multilevel Explanations}
\author{%
  Karthikeyan Natesan Ramamurthy, Bhanukiran Vinzamuri, Yunfeng Zhang, Amit Dhurandhar\\
  IBM Research, Yorktown Heights, NY USA 10598 \\
  \small{\texttt{knatesa@us.ibm.com, bhanu.vinzamuri@ibm.com, zhangyun@us.ibm.com, adhuran@us.ibm.com}}
}
\date{}
\begin{document}
\maketitle

\begin{abstract}

In recent years, post-hoc local instance-level and global dataset-level explainability of black-box models has received a lot of attention. Much less attention has been given to obtaining insights at intermediate or group levels, which is a need outlined in recent works that study the challenges in realizing the guidelines in the General Data Protection Regulation (GDPR).
In this paper, we propose a meta-method that, given a typical local explainability method, can build a multilevel explanation tree. The leaves of this tree correspond to the local explanations, the root corresponds to the global explanation, and intermediate levels correspond to explanations for groups of data points that it automatically clusters. The method can also leverage side information, where users can specify points for which they may want the explanations to be similar. We argue that such a multilevel structure can also be an effective form of communication, where one could obtain few explanations that characterize the entire dataset by considering an appropriate level in our explanation tree. Explanations for novel test points can be cost-efficiently obtained by associating them with the closest training points. When the local explainability technique is generalized additive (viz. LIME, GAMs), we develop a fast approximate algorithm for building the multilevel tree and study its convergence behavior. We validate the effectiveness of the proposed technique based on two human studies -- one with experts and the other with non-expert users -- on real world datasets, 
and show that we produce high fidelity sparse explanations on several other public datasets.
\end{abstract}

\begin{figure*}[h]
%\vspace{-4mm}
\centering
%\begin{center}
\includegraphics[width=\textwidth]{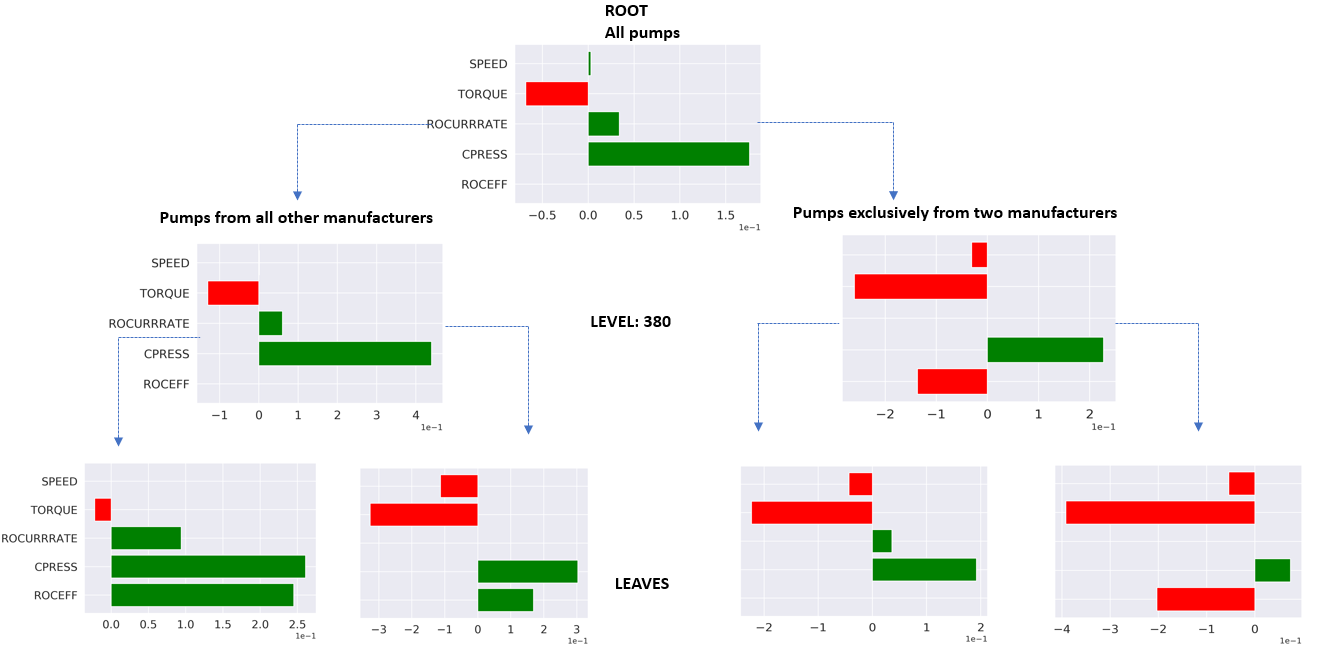}

\caption{Above we see an illustration of multilevel explanations generated by MAME for a real industrial pump failure dataset consisting of 2500 wells. We show three levels: the bottom level (four) leaves which correspond to example local explanations, the top level corresponds to one global explanation and an intermediate level corresponds to explanations for two groups highlighted by MAME. Based on expert feedback these intermediate explanations although explain the same type of pump (Progressive cavity), they have \emph{different manufacturers} resulting in some noticeable difference in behaviors. Another interesting aspect to note is that each level provides distinct enough information not subsumed by just local or global explanations, thus motivating the need for such multilevel explanations.}
%\end{center}
\label{fig:mame_pump_intro}
%\vspace{-6mm}
\end{figure*}

\section{Introduction}

A very natural and effective form of communication is to first set the stage through high level general concepts and then only dive into more of the specifics \cite{goodcomm}. In addition, the transition from high level concepts to more and more specific explanations should ideally be as logical or smooth as possible \cite{goodcomm2,liptonPgoodexp}. For example, when you call a service provider there is usually an automated message trying categorize the problem at a high level followed by more specific questions until eventually if the issue isn't resolved you might be connected to a human representative who can further delve into details from that point on. Another example is when you are presenting a topic one usually starts at a high level providing some background and motivation followed by more specifics. A third example is when you visit the doctor with an ailment, you first have to fill forms which capture information at various (higher) levels of granularity such as your family's medical history followed by your personal medical history, after which a nurse may take your vitals and ask questions pertaining to your current situation. A doctor may perform further tests to pinpoint the problem.
In all these cases, information or explanations you provide at multiple levels enables others to obtain insight that otherwise may be opaque. Recent work \cite{gdpr-multilevel} has stressed the importance of having such multilevel explanations for successfully meeting the requirements of Europe's General Data Protection Regulation (GDPR) \cite{gdpr}. They argue that simply having local or global explanations may not be sufficient for providing satisfactory explanations in many cases. In fact, even in the widely participated FICO explainability challenge \cite{FICO} it was expected that one provides not just local explanations but also insights at the intermediate class level.

Given the omnipresence of multilevel explanations across various real world settings, we in this paper propose a novel model agnostic multilevel explanation (MAME) method that can take a local explainability technique such as LIME \cite{lime}
along with a dataset and can generate multiple explanations for each of the examples corresponding to different degrees of cohesion (i.e. parameter tying) between (explanations of) the examples, where each such degree determines a level in our multilevel explanation tree and is explicitly \emph{controllable}. At the extremes, the leaves would correspond to independent local explanations as would be the case using standard local explainability techniques (viz. LIME), while the root of the tree would correspond to practically a single explanation given the high degree of cohesion between all the explanations at this level. An illustration of this is seen in figure \ref{fig:mame_pump_intro}, where multilevel explanations generated by MAME for a real industrial pump failure dataset consisting of 2500 wells. We show three levels: the bottom level (four) leaves which correspond to example local explanations (amongst many), the top level corresponds to one global explanation and an intermediate level corresponds to explanations for two groups highlighted by MAME. The dotted lines indicate that the (explanation) nodes are descendants of the node above, but not direct children. Based on expert feedback these intermediate explanations although explain the same type of pump have different manufacturers resulting in some noticeable difference in behaviors. This is discussed in detail in Section \ref{OG}. Also note that each level provides distinct enough information not subsumed by just local or global explanations, thus motivating the need for such multilevel explanations.
Such explanations can thus be very insightful in identifying key characteristics that bind together different examples at various levels of granularity. Moreover, they can also provide \emph{exemplar based explanations} looking at the groupings at specific levels (viz. different pumps by the same manufacturer).

Our method can also take into account side information such as similarity in explanations based on class labels or user specified groupings based on domain knowledge for a subset of examples. Moreover, one can also use non-linear additive models going beyond LIME to generate local explanations. Our method thus provides a lot of \emph{flexibility} in building multilevel explanations that can be customized apropos a specific application.

We prove that our method actually \emph{forms a tree} in that examples merged in a particular level of the tree remain together at higher levels. The proposed \textit{fast approximate algorithm} for obtaining multilevel explanations is proved to converge to the exact solution.
We also validate the effectiveness of the proposed technique based on two
human studies – one with experts and the other with non-expert users – on real world datasets, and show that we produce high fidelity sparse explanations on several other public datasets.

\section{Related Work}
\label{sec:relatedwork}
We now look at some important lines of work in explainable AI. The most traditional direction is to directly build interpretable models such as rule lists or decision sets \cite{decl,twl} on the original data itself so that no post-hoc interpretability methods are required to uncover the logic behind the proposed actions. These methods however, may not readily give the best performance in many applications compared with a complex black-box model. 
There are works which try to provide local \cite{lime, montavon2017methods, unifiedPI,cem} as well as global explanations that are both feature and exemplar based \cite{maple}. Exemplar based explanations \cite{proto,l2c} essentially identify few examples that are representative of a larger dataset. The previous work however, uses distinct models to provide the local and global explanations where again consistency between the models could potentially be an issue.
Pedreschi \textit{et al.} \cite{pedreschi2018open} propose a method that clusters local rule-based explanations to learn a local to global tree structure. This however suffers from the fact that the local explanations which are typically generated independently may not be consistent for nearby examples making the process of understanding and eventually trusting the underlying model challenging. TreeExplainer~\cite{lundberg2020loal} is another local explanation method specifically for trees based on game theory concepts. The authors present five new methods that combine many local explanations to provide global insight, which allows to retain local faithfulness to the model. This work also has similar limitations as \cite{pedreschi2018open}. Tsang \textit{et al.} \cite{tsang2018can} propose a hierarchical method to study the change in the behavior of interactions for a local explanation from an instance level to across the whole dataset. One major difference is that they do not fuse explanations as we do as they go higher up the tree. Moreover, their notion of hierarchy is based on learning higher order interactions between the prediction and input features, which is different from ours.

Our  approach has relations to convex clustering \cite{chen2015convex,weylandt2019dynamic}, and its generalizations to multi-task learning \cite{yu2018simultaneous, yu2017multitask}. However, our goal is completely different (multilevel post-hoc explainability) and our methodology of computing and using local models that mimic black-box predictions is also different.

\begin{algorithm}[t]%[htpb]
    \caption{Model Agnostic Multilevel Explanation (MAME) method}
    \label{MAME}
\begin{algorithmic}
\STATE \textbf{Input:}  Dataset $x_1, ..., x_n$, black-box model $f(.)$, the coordinate wise map $g(.)$ and prior knowledge graph adjacency matrix $W$. 
\STATE i) Sample neighborhoods $\mathcal{N}_i$ for each example $x_i$.
\STATE ii) Construct matrix $D$ based on edge list $\mathcal{E}$.
\STATE iii) Initialize $\Theta^{(0)}$, $U^{(0)}$,$V^{(0)}$ and set $k=0$, multiplicative step-size $t$ (say to $1.01$), $\gamma^{(0)}=\epsilon$, grouping threshold $\tau$ (say to $1e-6$), $\rho$ (say to 2), and $tol.$ (say to $1e-6$).
\STATE  iv) Initialize a disjoint set with leaves of the multilevel the tree $S = \{1, \ldots, n\}$.
\WHILE{$\|V^{k+1}\| > tol.$}
\STATE Obtain $\Theta^{(k+1)}$ by solving (\ref{eqn:mame_admm_theta}).
\STATE Obtain $U^{(k+1)}$ by solving (\ref{eqn:mame_admm_U}).
\STATE Obtain $V^{(k+1)}$ by solving (\ref{eqn:mame_admm_V}) with $\beta$ set as $\gamma^{(k)}$.
\STATE Obtain $Z_1^{(k+1)}$,  $Z_2^{(k+1)}$ by solving (\ref{eqn:mame_admm_Z1}) and (\ref{eqn:mame_admm_Z2}).
\STATE For each edge $e_l = (i, j) \in \mathcal{E}$, if $\|v_l\| < \tau$, perform $\text{Union}(i, j)$.
\STATE $k=k+1$; $\gamma^{(k+1)}=\gamma^{(k)}*t$.
\ENDWHILE
\STATE iv) Recover the multilevel tree by keeping track of the disjoint set unions.
\STATE v) For every group of examples in each tree node, post-process to get representative explanations by optimizing  (\ref{eqn:mame}) with $\beta=0$ only for the  examples in the group. 
\end{algorithmic}
\end{algorithm}

\section{Method}
\label{sec:method}

Let $X\times Y$ denote the input-output space and $f:X\rightarrow Y$ a classification or regression function corresponding to a black-box classifier. For any positive integer $p$ let $g:\mathbb{R}^p\rightarrow \mathbb{R}^p$ denote a function that acts \emph{co-ordinate wise} on any feature vector $x\in X$. Thus, if $g(x)$ is an identity map then we recover $x$. However, $g(.)$ could be non-linear  and we could apply different non-linearities to different coordinates of $x$. If $\theta$ is a parameter vector of dimension $p$, then $l(x,\theta)=g(x)^T\theta$ can be thought of as a generalized additive model which could be learned based on the predictions of $f(.)$ for examples near $x$ thus providing a local explanation for $x$ given by $\theta$. Let $\psi(x,z)$ be the similarity between $x$ and $z$. This can be estimated for a distance function $d(.,.)$ as $\exp(-\gamma d(x, z))$. 
Let $(x_1, y_1), ..., (x_n, y_n)$ denote a dataset of size $n$, where the output $y_i$ may or may not be known for each example. Let $\mathcal{N}_i$ be the neighborhood of an example $x_i$, i.e. examples that are highly similar to $x_i$ formally defined as $\mathcal{N}_i=\{z\in X|\psi(x_i,z)\ge \gamma\}$ for a $\gamma\in [0,1]$ close to 1. In practice, $\mathcal{N}_i$
of size $m$ can be generated by randomly perturbing $x_i$ as done in previous works \cite{lime} $m$ times.
Given this we can define the following optimization problem:
\begin{align}
\label{eqn:mame}
\min_{\Theta^{(\beta)}} \sum_{i = 1}^n \sum_{z \in \mathcal{N}_i}&\psi\left(x_i,z\right) \left(f(z) - g(z)^T \theta_i\right)^2 + \alpha_i ||\theta_i||_1 + \beta \left(\sum_{i < j} w_{ij} \|\theta_i - \theta_j\|_2 \right)
\end{align}
where $\alpha_1, ...,~\alpha_n, ~ \beta \ge 0$ are regularization parameters, $w_{ij}\ge 0$ are custom weights and $\Theta^{(\beta)}$ is a set of $\theta_i$ $\forall i\in\{1, ..., n\}$ for a given $\beta$.

The first term in (\ref{eqn:mame}) tries to make the local models for each example to be as faithful as possible to the black-box model, in the neighborhood of the example. 
The second term tries to keep each explanation $\theta_i$ sparse. The third term tries to group together explanations. This in conjunction with the first term has the effect of trying to make \emph{explanations of similar examples to be similar}. Here we have the opportunity to \emph{inject domain knowledge} by creating a prior knowledge graph with adjacency matrix $W$. The edge weights $w_{ij}$ can be set to high values for pairs of examples that we consider to have similar explanations, while setting zero weights for other pairs if we believe their explanations will be different. 

We solve the above objective for different values of $\beta$, wherein $\beta=0$ corresponds to the leaves of the tree, with each leaf representing a  training example and its local explanation. At $\beta = 0$, (\ref{eqn:mame}) decouples to $n$ optimizations, corresponding to the LIME explanations. $\beta$ can be adaptively increased from $0$ resulting in progressive grouping of explanations (and hence the corresponding examples) forming higher levels of the tree. The grouping happens because $\theta_i$ and $\theta_j$ with a non-zero $w_ij$ are encouraged to get closer as $\beta$ increases in (\ref{eqn:mame}). The intermediate levels hence correspond to disjoint clusters of examples with their representative explanations. The root of the tree obtained at a high $\beta$ value represents the global explanation for the entire dataset. Our formulation differs from Two Step (one of our baselines in Section~\ref{sec:experiments}) which does convex clustering on LIME-based local explanations minimizing $\|\Omega-\tilde{\Theta}\|^{2}_F + \beta \left(\sum_{i < j} w_{ij} \|\tilde{\theta}_i - \tilde{\theta}_j\|_2 \right)$ where $\Omega \in \mathbb{R}^{p \times n}$ are LIME-based local explanations for $n$ instances. Two Step also results in a multilevel tree, although it does not explicitly ensure fidelity to the black-box model predictions.

\subsection{Optimization Details}
\label{optd}
We solve the optimization  in (\ref{eqn:mame}) using ADMM~\cite{boyd2011distributed} by posing (\ref{eqn:mame}) as
%\vspace{-0.6in}
\begin{gather}
\min_{\Theta^{(\beta)}, U^{(\beta)}, V^{(\beta)}} \sum_{i = 1}^n \sum_{z \in \mathcal{N}_i}\psi\left(x_i,z\right) \left(f(z) - g(z)^T \theta_i\right)^2 + \alpha_i ||u_i||_1 + \beta \left(\sum_{e_l \in \mathcal{E}} w_l \|v_l\|_2 \right),\nonumber \\
\text{ such that } \Theta = U, \text{ } \Theta D = V.
\label{eqn:mame_1}
\end{gather} The augmented Lagrangian with scaled dual variables is
\begin{gather}
%\vspace{-1in}
\min_{\Theta, U, V, Z_1, Z_2} \sum_{i = 1}^n \sum_{z \in \mathcal{N}_i}\psi\left(x_i,z\right) \left(f(z) - g(z)^T \theta_i\right)^2 + \alpha_i ||u_i||_1 + \beta \left(\sum_{e_l \in \mathcal{E}} w_l \|v_l\|_2 \right) + \nonumber \\
\label{eqn:mame_alm}
\frac{\rho}{2}\|\Theta - U + Z_1 \|_F^2 + \frac{\rho}{2}\|\Theta D - V + Z_2 \|_F^2. 
\end{gather} 

Here, $U \in \mathbb{R}^{p \times n}$, and $V \in \mathbb{R}^{p \times |\mathcal{E}|}$ are the auxiliary variables, and $\mathcal{E}$ is the list of edges in the prior knowledge graph with non-zero weights. The columns of $U$ and $V$ are denoted by $u_i$ and $v_l$ respectively. $u_i$ corresponds to the same column in $\Theta$ ($\theta_i$), and $D \in \mathbb{R}^{n \times |\mathcal{E}|}$ acts on $\Theta$ to encode differences in their columns. For example, the column of $D$ that encodes $\theta_i - \theta_j$ will contain $1$ at row $i$ and $-1$ at row $j$. $Z_1$ and $Z_2$ are the scaled dual variables. This reformulation is inspired by \cite{weylandt2019dynamic}.

The ADMM iterations for a given value of $\beta$ are:
\begin{gather}
\Theta^{(k+1)}= \argmin_{\Theta} \sum_{i = 1}^n \sum_{z \in \mathcal{N}_i}\psi\left(x_i,z\right) \left(f(z) - g(z)^T \theta_i^{(k)}\right)^2 + \frac{\rho}{2}\|\Theta^{(k)} - U^{(k)} + Z_1^{(k)} \|_F^2 + \nonumber\\
\label{eqn:mame_admm_theta}
 \frac{\rho}{2}\|\Theta^{(k)} D - V^{(k)} + Z_2^{(k)} \|_F^2,\\
\label{eqn:mame_admm_U}
U^{(k+1)} = \argmin_{U} \sum_i \alpha_i ||u_i^{(k)}||_1 +
\frac{\rho}{2}\|\Theta^{(k+1)} - U^{(k)} + Z_1^{(k)} \|_F^2, \\
\label{eqn:mame_admm_V}
V^{(k+1)} = \argmin_{V} \beta \left(\sum_{e_l \in \mathcal{E}} w_l \|v_l^{(k)}\|_2 \right) + \frac{\rho}{2}\|\Theta^{(k+1)} D - V^{(k)} + Z_2^{(k)} \|_F^2, \\
\label{eqn:mame_admm_Z1}
Z_1^{(k+1)} = Z_1^{(k)} + \Theta^{(k+1)} - U^{(k+1)}, \\
\label{eqn:mame_admm_Z2}
Z_2^{(k+1)} = Z_2^{(k)} + \Theta^{(k+1)} D - V^{(k+1)}.
\end{gather} 

Since (\ref{eqn:mame_admm_theta})-(\ref{eqn:mame_admm_Z2}) should be solved for progressively increasing  values of $\beta$, we adopt the idea of Algorithmic Regularization (AR) \cite{weylandt2019dynamic} to run (\ref{eqn:mame_admm_theta})-(\ref{eqn:mame_admm_Z2}) only once for each value of $\beta$, and warm-start the next set of ADMM iterations with the estimate for the previous $\beta$ value. The $\beta$ values are obtained by initializing to a small $\epsilon$ and multiplying it by a step size $t (> 1.0)$ for the next $k$. We will denote these approximate solutions as $\Theta^{(k)}$, where $k$ corresponds to the index of the set of $\beta$ values. The detailed algorithm for obtaining the multilevel tree using MAME is described in Algorithm \ref{MAME}.

The exact solution where (\ref{eqn:mame_admm_theta})-(\ref{eqn:mame_admm_Z2}) are run until convergence for each $\beta$ value, will be denoted as $\Theta^{(\beta)}$. We show that the approximate solution converges to the exact one in the following sense, which is proved in the appendix. Note that the Theorem holds true for the approximate solutions $\{\Theta^{(k)}\}$ without the post-processing step (v) in Algorithm \ref{MAME}.

\begin{restatable}{theorem}{mamehausdorffmain} \label{thm:mamehausdorffmain}
As $(t, \epsilon) \to (1, 0)$, where $t$ is the multiplicative step-size update and $\epsilon$ is the initial regularization level, the sequence of AR-based primal solutions $\{\Theta^{(k)}\}$, and the sequence of exact primal solutions $\{\Theta^{(\beta)}\}$ converge in the following sense.
\begin{align}
    \max\Big\{ 
    E_\beta \left(\inf_k \|\Theta^{(k)} - \Theta^{(\beta)} \|\right),
    E_k \left(\inf_\beta \|\Theta^{(k)} - \Theta^{(\beta)} \| \right)
    \Big\}
    \xrightarrow{(t, \epsilon) \to (1, 0)} 0
\end{align}
\end{restatable}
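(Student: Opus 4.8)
The plan is to recognize the quantity in the theorem as a (two-sided) Hausdorff distance between the exact solution path $\{\Theta^{(\beta)}\}$ and the algorithmic-regularization (AR) path $\{\Theta^{(k)}\}$ generated along the geometric grid $\beta_k = \epsilon\, t^k$, and to show it vanishes by combining (a) regularity of the exact path in $\beta$ with (b) non-expansiveness of the ADMM operator that drives both paths. First I would set up the shared structure: for each fixed $\beta$, the objective in (\ref{eqn:mame}) is convex in $\Theta$ -- the weighted least-squares fidelity term and the augmented-Lagrangian quadratics are convex, the $\ell_1$ term is convex, and the fusion penalty $\sum_{i<j} w_{ij}\|\theta_i-\theta_j\|_2$ is a sum of norms, hence convex -- so the updates (\ref{eqn:mame_admm_theta})--(\ref{eqn:mame_admm_Z2}) define, for each $\beta$, a fixed-point map $T_\beta$ whose fixed points encode the exact primal--dual optimum, with $\Theta^{(\beta)}$ its primal coordinate. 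I would then invoke the standard fact that the ADMM operator is firmly non-expansive, so a single application of $T_\beta$ cannot increase the distance to its fixed point; note also that the $\frac{\rho}{2}\|\Theta-U+Z_1\|_F^2$ term makes each $\Theta$-update strongly convex regardless of the fidelity Hessian.

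Second, I would establish local Lipschitz continuity of the exact path $\beta \mapsto \Theta^{(\beta)}$. Generically, with enough neighborhood samples, the fidelity Hessian $\sum_{z\in\mathcal{N}_i}\psi(x_i,z)\,g(z)g(z)^\top$ is positive definite, so (\ref{eqn:mame}) is strongly convex in each column, the minimizer is unique, and a parametric-optimization argument gives that it varies continuously (indeed Lipschitz) in the scalar $\beta$. Consequently, for consecutive grid points the increment $\|\Theta^{(\beta_k)}-\Theta^{(\beta_{k-1})}\|$ is controlled by $|\beta_k-\beta_{k-1}|$, which shrinks uniformly as $t\to 1$.

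Third -- and this is the crux -- I would bound $d_k := \|\Theta^{(k)}-\Theta^{(\beta_k)}\|$ by induction on $k$. AR performs a single application of $T_{\beta_k}$ warm-started at the step-$(k-1)$ iterate, which is near $\Theta^{(\beta_{k-1})}$. Using that $T_{\beta_k}$ fixes $\Theta^{(\beta_k)}$ together with non-expansiveness, one step yields roughly $d_k \le d_{k-1} + \|\Theta^{(\beta_k)}-\Theta^{(\beta_{k-1})}\|$, i.e. the warm-start error plus the path-increment error; the base case follows since as $\epsilon\to 0$ the initial problem at $\beta_0=\epsilon$ approaches the decoupled $\beta=0$ problem, forcing $d_0\to 0$. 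The danger is that summing increments over the growing number of grid points as $t\to 1$ could blow $d_k$ up. The key is that non-expansiveness is in fact a mild contraction \emph{toward} the path, so the accumulated error is damped rather than summed freely; one must show the recursion admits a fixed-point-type bound of the order of the maximal single-step path increment, which tends to $0$. Establishing this damping -- adapting the AR analysis of convex clustering in \cite{weylandt2019dynamic} to the present general quadratic-plus-$\ell_1$ fidelity term -- is the main obstacle.

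Finally, I would assemble the Hausdorff bound. The forward direction $E_k\big(\inf_\beta\|\Theta^{(k)}-\Theta^{(\beta)}\|\big)$ is immediate from Step three, since $\inf_\beta\|\Theta^{(k)}-\Theta^{(\beta)}\|\le \|\Theta^{(k)}-\Theta^{(\beta_k)}\|=d_k$. For the reverse direction $E_\beta\big(\inf_k\|\Theta^{(k)}-\Theta^{(\beta)}\|\big)$, I would use that as $t\to 1$ the grid $\{\beta_k\}$ becomes dense and the exact path is continuous, so any $\Theta^{(\beta)}$ lies within a small distance $\|\Theta^{(\beta)}-\Theta^{(\beta_k)}\|$ of some grid solution, which is in turn within $d_k$ of $\Theta^{(k)}$; the triangle inequality bounds $\inf_k\|\Theta^{(k)}-\Theta^{(\beta)}\|$ by a quantity vanishing as $(t,\epsilon)\to(1,0)$. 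Taking the max of the two one-sided bounds gives the claim. Since the bound obtained in Step three is uniform in $\beta$ and $k$, it controls $E_\beta$ and $E_k$ whether these operators denote suprema (as in the Hausdorff metric) or expectations.
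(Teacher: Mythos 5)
Your architecture matches the paper's: a Lipschitz bound on the exact path $\beta \mapsto \Theta^{(\beta)}$, a per-step error recursion for $d_k = \|\Theta^{(k)} - \Theta^{(\beta_k)}\|$ along the geometric grid, and a nearest-grid-point triangle inequality to convert the per-grid bound into the two one-sided Hausdorff-type bounds. However, you have explicitly left the crux unresolved, and the mechanism you gesture at would not close it. Firm non-expansiveness of the ADMM operator $T_{\beta_k}$ gives only $d_k \le d_{k-1} + \|\Theta^{(\beta_k)} - \Theta^{(\beta_{k-1})}\|$, and summing these increments over the $O(\log(\beta_{\max}/\epsilon)/\log t)$ grid points yields a bound of order $L\epsilon + L(\beta_{\max}-\epsilon) \approx L\beta_{\max}$, which does \emph{not} vanish as $(t,\epsilon) \to (1,0)$. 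You correctly identify this danger and assert that ``non-expansiveness is in fact a mild contraction,'' but you neither prove nor cite a contraction property, and non-expansiveness alone does not imply one. This is precisely the ingredient the paper supplies: it invokes the Hong--Luo global linear convergence theory for ADMM on sums of separable convex functions, verifies that the MAME reformulation (with constraint matrix $\begin{pmatrix} I+D & -I & -I\end{pmatrix}$, polyhedral epigraph of the $\ell_1$ term, and the group-norm structure of the fusion term) satisfies its structural assumptions, and takes $c = \sup_{\lambda \le \lambda_{\max}} c_\lambda < 1$ to get a \emph{uniform Q-linear} rate. With the strict contraction the recursion becomes $d_k \le c\bigl(d_{k-1} + \|\Theta^{(\beta_k)} - \Theta^{(\beta_{k-1})}\|\bigr)$, and an explicit induction gives the global error bound $d_k \le c^k L\epsilon + L(t-1)\epsilon t^k \sum_{i=1}^{k-1}(c/t)^i$; the prefactors $\epsilon$ and $(t-1)$ (together with $\epsilon t^k \le \beta_{\max}$ and boundedness of the geometric sum) are what make the accumulated error vanish. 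Without establishing this uniform geometric rate, your Step three does not go through.

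A secondary, more minor divergence: you derive path regularity from assumed positive definiteness of the fidelity Hessian $\sum_{z\in\mathcal{N}_i}\psi(x_i,z)\,g(z)g(z)^{\top}$, which is an extra genericity assumption. The paper instead differentiates the KKT stationarity condition in $\lambda$ and uses that the subgradients of the two norm penalties are bounded, obtaining a piecewise Lipschitz path that is globally Lipschitz by continuity and constancy beyond $\lambda_{\max}$; this avoids conditioning on the sample design. Your final assembly of the two one-sided bounds is essentially identical to the paper's and is fine once the per-grid bound is in hand.
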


The approximation quality and timing comparisons between the AR-based and exact methods are in the appendix.

We now show that our method actually forms a tree in that explanations of examples that are close together at lower levels will remain at least equally close at higher levels.
\begin{lemma}[Non-expansive map for exact solutions]
If $\beta_1, ..., \beta_k$ are regularization parameters for the last term in (\ref{eqn:mame}) for $r$ consecutive levels in our multilevel explanations where $\beta_1=0$ is the lowest level with $\theta_{i, s}$ and $\theta_{j, s}$ denoting the (globally) optimal coefficient vectors (or explanations) for $x_i$ and $x_j$ respectively corresponding to level $s\in\{1, ..., r\}$, then for $s>1$ and $w_{ij}>0$ we have 
\begin{equation*}
\|\theta_{i, s} - \theta_{j, s}\|_2 \le \|\theta_{i, s-1} - \theta_{j, s-1}\|_2.
\end{equation*}
\end{lemma}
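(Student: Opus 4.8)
The plan is to treat the level parameter as a single scalar regularization weight and to exploit the convexity of the two parts of the objective separately. Write the objective in (\ref{eqn:mame}) as $\Phi_\beta(\Theta) = h(\Theta) + \beta R(\Theta)$, where $h(\Theta) = \sum_{i}\sum_{z\in\mathcal{N}_i}\psi(x_i,z)(f(z)-g(z)^T\theta_i)^2 + \sum_i \alpha_i\|\theta_i\|_1$ collects the data-fidelity and sparsity terms and $R(\Theta) = \sum_{i<j} w_{ij}\|\theta_i-\theta_j\|_2$ is the fusion penalty; both are convex in $\Theta$. Since higher levels correspond to larger $\beta$ (with $\beta_1=0$ the lowest), it suffices to fix two consecutive levels with $\beta_{s-1} < \beta_s$ and let $\Theta_{s-1},\Theta_s$ be the corresponding global minimizers, whose columns are the $\theta_{i,s-1},\theta_{i,s}$.

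First I would establish the aggregate form of non-expansiveness. Using that $\Theta_s$ is feasible but suboptimal for $\Phi_{\beta_{s-1}}$ and $\Theta_{s-1}$ is suboptimal for $\Phi_{\beta_s}$, I add the two optimality inequalities $\Phi_{\beta_{s-1}}(\Theta_{s-1}) \le \Phi_{\beta_{s-1}}(\Theta_s)$ and $\Phi_{\beta_s}(\Theta_s) \le \Phi_{\beta_s}(\Theta_{s-1})$. The $h$-terms cancel, leaving $(\beta_s-\beta_{s-1})\,R(\Theta_s) \le (\beta_s-\beta_{s-1})\,R(\Theta_{s-1})$, hence $R(\Theta_s)\le R(\Theta_{s-1})$, i.e.\ $\sum_{i<j} w_{ij}\|\theta_{i,s}-\theta_{j,s}\|_2 \le \sum_{i<j} w_{ij}\|\theta_{i,s-1}-\theta_{j,s-1}\|_2$. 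This two-point argument needs no differentiability, and it already proves the stated inequality in the single-edge case while giving the weighted-sum version in general.

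To upgrade this to the per-pair bound I would pass to the first-order (KKT) optimality conditions. Writing $v_{ij}=\theta_i-\theta_j$ and using subgradients $s_{ij}\in\partial\|v_{ij}\|_2$ of the norm, it suffices to prove, for each edge with $w_{ij}>0$, the obtuse-angle inequality $\langle v_{ij,s-1}-v_{ij,s},\,v_{ij,s}\rangle \ge 0$; by Cauchy--Schwarz this yields $\|v_{ij,s}\|_2^2 \le \langle v_{ij,s-1},v_{ij,s}\rangle \le \|v_{ij,s-1}\|_2\,\|v_{ij,s}\|_2$ and hence the claim. The encouraging part is that the fusion subgradient decouples across edges: monotonicity of $\partial\|\cdot\|_2$ gives $\langle s_{ij,s}-s_{ij,s-1},\,v_{ij,s}-v_{ij,s-1}\rangle\ge 0$ edge by edge, and rescaling by $\beta$ via the optimality relation relates these terms to the distances $\|v_{ij,s}\|_2,\|v_{ij,s-1}\|_2$.

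The main obstacle is that the loss-plus-sparsity operator $\partial h$ couples all the columns $\theta_i$ together, so the global monotone-operator inequality mixes the per-edge terms and does not by itself isolate a single edge, which is exactly what the per-pair statement demands. I would attack this through an \emph{effective two-node} reduction: freeze every other coordinate, summarize the loss together with the remaining edges incident to $i$ and $j$ as $\beta$-dependent effective targets, reduce the pair to a soft-thresholding of $v_{ij}$, and then control how fast those effective targets can separate relative to the $2\beta w_{ij}$ shrinkage. This is where care is genuinely needed, because for the group-$\ell_2$ penalty the convex-clustering path is not unconditionally agglomerative, so ruling out a previously close (or fused) pair drifting apart may require a mild condition on the weights $w_{ij}$ or strong convexity of $h$ within each emerging cluster; checking that such a condition is met in our formulation is the crux of the argument.
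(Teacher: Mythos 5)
Your first argument --- adding the two optimality inequalities for $\Phi_{\beta_{s-1}}$ and $\Phi_{\beta_s}$ so that the $h$-terms cancel and concluding $R(\Theta_s)\le R(\Theta_{s-1})$ --- is correct, and it is in substance the same exchange argument the paper gives: the paper phrases it as a contradiction (if the fusion penalty were larger at the higher level, the other terms would have to compensate, contradicting optimality at the lower level), but it amounts to the same two-point comparison of global minimizers. For this portion you are on the paper's route, just written more cleanly.

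The genuine issue is the one you yourself flag: the exchange argument only controls the weighted sum $\sum_{i<j} w_{ij}\|\theta_{i,s}-\theta_{j,s}\|_2$, whereas the lemma asserts monotonicity for each individual pair with $w_{ij}>0$. When the prior graph has more than one edge, one pair can drift apart while the total fusion penalty still decreases, so the aggregate inequality does not imply the per-pair one; the paper's proof sketch silently makes exactly this leap, treating the growth of a single pair's distance as if it forced the total penalty to grow. Your proposed KKT / effective-two-node reduction is the natural place to look for the per-pair claim, but as you note it is not completed, and your caution is warranted: for group-$\ell_2$ fusion with general weights the solution path is known not to be unconditionally agglomerative, so the per-pair statement genuinely needs either extra hypotheses (e.g.\ a single edge incident to the pair, or conditions on $W$) or a more delicate argument than anything supplied here. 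In short, your proposal does not prove the lemma as stated, but the step you could not complete is precisely the step the paper's own sketch glosses over; you have not missed an idea that the paper actually provides.
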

The proof of this lemma is available in the appendix. 

\section{Experiments}
\label{sec:experiments}

We now evaluate our method based on three different scenarios. The first is a case study involving human experts in the Oil \& Gas industry. Insights provided by MAME were semantically meaningful to the experts both when they did and did not provide side information. Second, we conducted a user study with data scientists based on a public loan approval dataset who were not experts in finance. We found that our method was significantly better at providing insight to these non-experts compared with Two Step -- which is hierarchical convex clustering \cite{chen2015convex,weylandt2019dynamic} where a median explanation is computed for each cluster -- and Submodular Pick LIME (SP-LIME) \cite{lime}. Data scientists are the right catcher for our method as a recent study \cite{umang} claims that explanations from AI based systems in most organizations are first ingested by data scientists. Third, we show quantitative benefit of our method in terms of two measures defined in Section \ref{QE}. The first measure, \textit{Generalized Fidelity}, quantifies how well we can predict the class for a test point using the explanation or feature importances of the closest training point and averaging this value over all test instances at different levels of aggregation. We find here too that our method is especially good when we have to explain the dataset based on few explanations, which is anyways the most desirable for human consumption. The second measure, \textit{Feature importance rank correlation}, shows the correlation between the ranks of feature importances between the black-box model and the explanation method. We observe that our proposed MAME method was superior to Two Step and SP-LIME in identifying the important features.

\subsection{Oil \& Gas Industry Dataset}
\label{OG}
We perform a case study with a real-world industrial pump failure dataset (classification dataset) from the Oil \& Gas industry. The pump failure dataset consists of sensor readings acquired from 2500 oil wells over a period of four years that contains pumps to push out oil. These sensor readings consist of measurements such as speed, torque, casing pressure (CPRESS), production (ROCURRRATE) and efficiency (ROCEFF) for the well along with the type of failure category diagnosed by the engineer. In this dataset, there are three major failure modes: Worn, Reservoir and Pump Fit. Worn implies the pump is worn out because of aging. Reservoir implies the pump has a physical defect. Pump Fit implies there is sand in the pump. From a semantics perspective, there can be seven different types of pumps in a well which can be manufactured separately by fourteen different vendors. We are primarily interested in modeling reservoir failures as they are the more difficult class of failures to identify. The black-box classifier used was a 7-layer multilayer perceptron (MLP) with parameter settings recommended by scikit-learn MLPClassifier. The dataset had 5000 instances 75\% of which were used to train the model and the remaining 25\% was test.

We conducted two types of studies here. One where we obtained explanations without any side information from the experts and the other where we were told certain groups of pumps that should exhibit similar behavior and hence should have similar explanations for their outputs.

\noindent\textbf{Study 1- Expert Evaluation:} In this study, we built the MAME tree on the training instances. We picked a level which had 4 clusters guided by expert input given the dataset had 4 prominent pump manufacturer types. In Figure \ref{fig:mame_pump_intro}, in the introduction we show the root and example leaves and the level with 4 clusters (level 380), where two of these clusters are shown that the expert felt were semantically meaningful. The expert said that the two clusters had semantic meaning in that, although both clusters predominantly contained the same type of Progressive Cavity pump (main pump type of interest), they were produced by different manufacturers and hence, had somewhat different behaviors. Two of the manufacturers were known to have better producing pumps in general which corresponds to the explanation on the right compared to that on the left which had pumps from all other mediocre producing manufacturers, which was consistent with on-field observations. Hence, the result uncovered by MAME without any additional semantic information gave the expert more trust in the model that was built.

\noindent\textbf{Study 2- Expert Evaluation with Side Information:}
In this study, the expert provided us a grouping of pumps
that should have similar explanations. Given the flexibility of MAME we were able to incorporate this knowledge ($W$) (see Eqn.~(\ref{eqn:mame})) into our optimization.
\begin{figure}[!ht]
\centering
\includegraphics[width=0.7\textwidth]{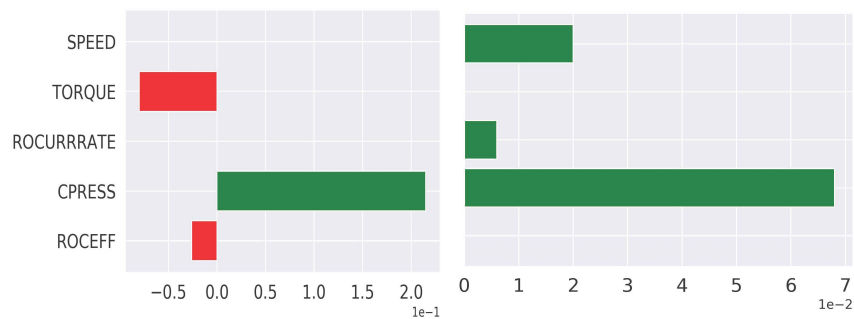}
\caption{Representative explanations for 2 clusters learned from MAME after incorporating expert provided knowledge which could segregate all the pumps into homogeneous pump type clusters, where each cluster could further be interpreted as consisting of primarily \emph{producing (left)} or \emph{well down (non-producing) (right)} pumps.}
\label{fig:pumpsemanticsmame}
\end{figure}

As explained above, based on expert input, we picked the level in the tree which had four clusters. Two of these clusters were small and had a mixed set of pumps. However, two of the bigger homogeneous clusters shown in Figure \ref{fig:pumpsemanticsmame} were of interest to the expert. The expert provided the following insights: \emph{non-producing} (Well-down category) pumps are more likely to be used in a run-to-failure scheme where they are used rigorously (i.e. at higher speed, production and casing pressure) than they would be run regularly which explains why these factors impact reservoir type failures here.

Producing pumps on the other hand keep producing oil adequately for longer periods while operating at optimal efficiency (slightly lower than max efficiency). Also operating these producing pumps with lower torque (caused by the helical rotor in the pump) can elongate their operational lifetime and reduce likelihood for imminent failure. 

These insights are consistent with our explanations and validates the fact that our method is able to effectively model the provided side information.

\subsection{User Study}
\label{sec:user_study}
To further evaluate MAME's ability to accurately capture high-level concepts for a given domain, we conducted another user study with 30 data scientist from three different technology companies. In contrast to the oil and gas case study, these data scientists are \textit{not} domain experts in the target task of the study (approving loans). We hypothesized that by showing the explanations for the high-level clusters found in a dataset, non-domain-experts can very quickly learn the critical relationships between predictors and outcome variables and start to make accurate predictions. Furthermore, we hypothesized that MAME produces better high-level explanations than the aforementioned Two Step method and the SP-LIME method \cite{lime}, which incrementally picks representative as well as diverse explanations based on a submodular objective. To evaluate this hypothesis, we created three conditions each with a different explanation method, and randomly assigned 10 participants to each condition. Example screenshots of the web based trials are provided in the appendix.

In the study, we asked the participants to play the role of a loan officer who needs to decide whether to approve or reject a person's loan request based on that person's financial features. We used the HELOC dataset \cite{FICO} as the basis for this task. The dataset contained 23 predictors, all of them were continuous variables about a person's financial record such as the number of loan-payment delinquencies, percentage of revolving balance, etc. One predictor, external credit score, was removed since it was essentially a summary of other variables and accounted for much of the variability in the outcome variable. The outcome variable was binary, with 0 indicating default on a loan and 1 indicating full repayment on a loan. 75\% of the dataset was used for training a random forest model with 100 trees, while the remaining was used to randomly pick instances for our user study. Based on the training set MAME, SP-LIME and Two Step were made to produce four representative explanations that would characterize the whole dataset and consequently partition it into four clusters. We chose the number four because we did not want to overwhelm the participants with too many explanations/clusters, and because we saw that the data could be partitioned into four types of applicants with each type having enough representation.

A single trial of the task proceeded as follows. First, the high-level explanation and the 22 financial features of a loan applicant were shown to the participant. Figure~\ref{fig:heloc_mame} shows the explanation generated using MAME. The top left graph shows that the classifier groups loan applicants into four clusters (y-axis) based on several key features (x-axis). The color of the squares indicates the average contribution of a feature to the classifier's prediction for a cluster, with orange colors indicating increasing probability of full repayment, whereas blue indicating decreasing probability. The numbers in the colored squares show the average feature value for that cluster. The top right graph shows the average of the classifier's predicted probability of repayment for loan applicants of each cluster. The bottom graph shows the overall importance of the features used in the top left graph. The longer the bar, the more important it is to the classifier. The same set of visual explanations was used throughout each condition as it does not change from trial to trial. To maintain the same level of explanation complexity across conditions, in the SP-LIME and Two Step conditions, we also used the algorithms to find four high-level clusters from the training data. However, the important features and the repayment probability of the clusters are substantially different from the MAME explanation.
\begin{figure}
    \centering
    \includegraphics[width=0.7\columnwidth]{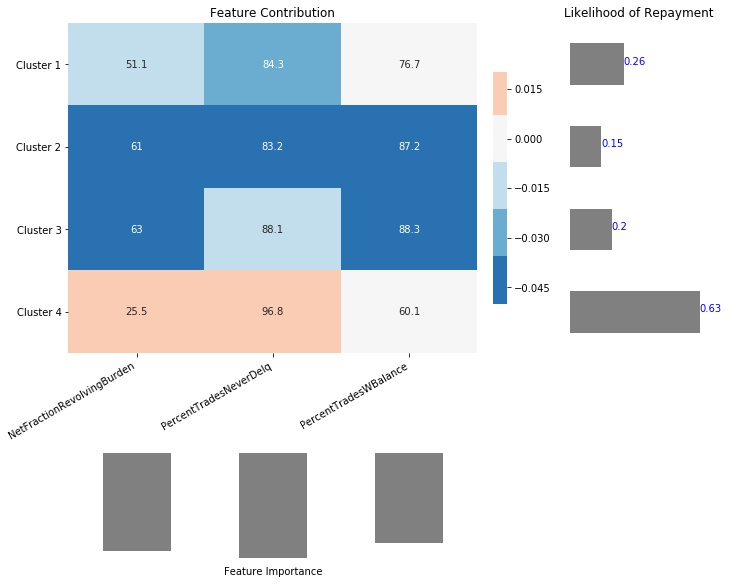}
    \caption{The high-level visual explanation shown in the user study for the MAME condition.}
    \label{fig:heloc_mame}
\end{figure}

After showing the necessary information, the participant was asked two questions for each trial. First, they were asked which of the four clusters displayed in the explanation graph is most similar to the loan applicant; second, they were asked to indicate their guess of the classifier's predicted repayment probability for the applicant on a scale of 0 to 100\% at the interval of 5\%.

Each participant completed 10 trials, which are randomly sampled from the test set. Their results are summarized in Figure~\ref{fig:heloc_results}. The left graph shows the mean squared error (MSE) between the classifier's probability prediction and the participant's guess of that prediction. MAME outperformed the other two methods since the participants had much lower MSE in this condition, averaging around just $0.036$. The right graph shows the number of trials (out of 10) in which the participant chose the correct most-similar cluster. The ground truth for this question was determined using nearest neighbor on the feature contribution of the given applicant and that of each cluster. As can be seen, MAME again outperformed other methods substantially. Participants in this condition were able to correctly find the most similar cluster for 7 out of 10 trials, while those in other conditions found on average less than 3 correct clusters. Both sets of results suggest that MAME produced more accurate high-level clusters than Two Step and SP-LIME methods. In addition, the very low squared difference suggests that non-experts can use MAME's high-level explanations to quickly understand how the model works and make reasonable predictions.
\begin{figure}
    \begin{minipage}[t]{0.50\textwidth}
    \centering
    \includegraphics[width=0.70\textwidth]{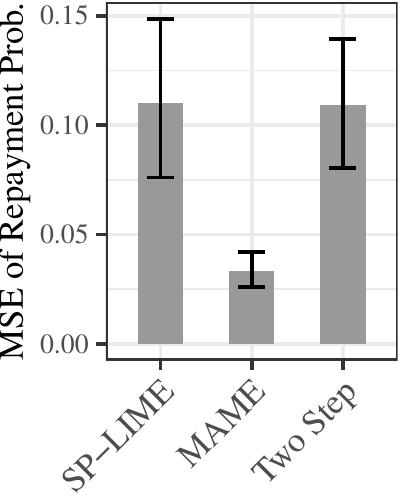}
    \end{minipage}
    \begin{minipage}[t]{0.50\textwidth}
    \centering
    \includegraphics[width=0.70\textwidth]{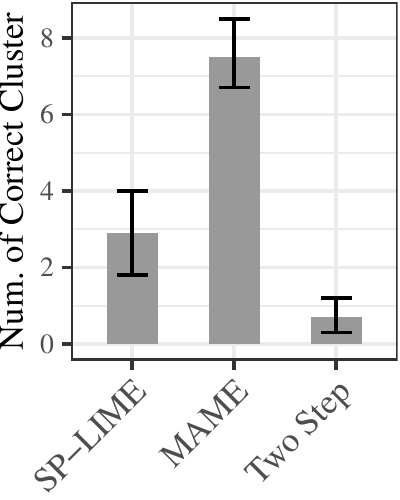}
    \end{minipage}
    \caption{Results of the user study. The left graph shows the mean squared error between the classifier's predicted repayment probability and the participants' guess of the classifier's prediction (lower is better). The right graph shows the average number of trials (out of 10) in which the participant correctly chose the most similar cluster (higher is better). The error bars show 95\% confidence intervals.}
    \label{fig:heloc_results}
\end{figure}

\begin{figure*}[t]
     \centering
     \footnotesize
     \begin{subfigure}[b]{0.49\textwidth}
        \centering
         \includegraphics[width=0.8\textwidth]{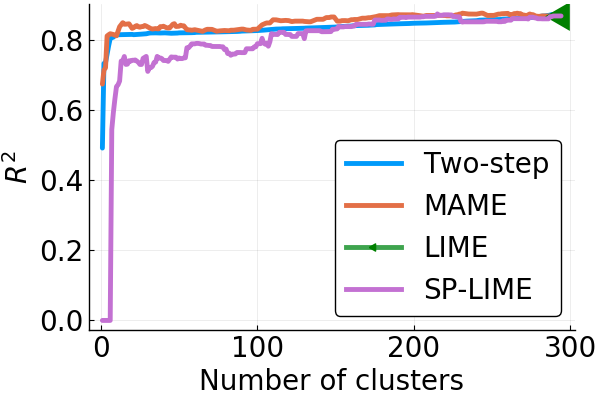}
         \caption{Auto MPG}
         \label{fig:autompgs_MLPR_fidelity}
     \end{subfigure}
     \begin{subfigure}[b]{0.49\textwidth}
        \centering
         \includegraphics[width=0.8\textwidth]{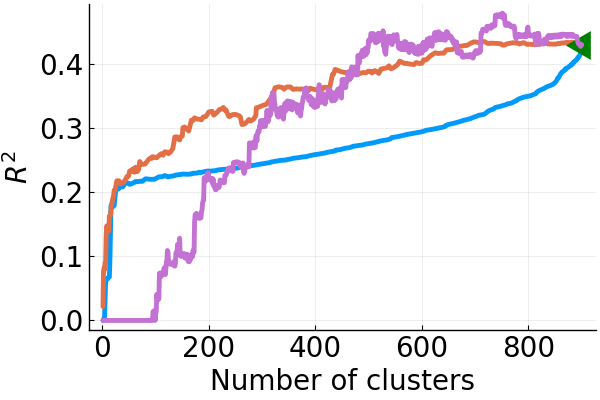}
         \caption{Retention}
         \label{fig:retention_1200_MLPC_fidelity}
     \end{subfigure}
     \begin{subfigure}[b]{0.49\textwidth}
         \centering
         \includegraphics[width=0.8\textwidth]{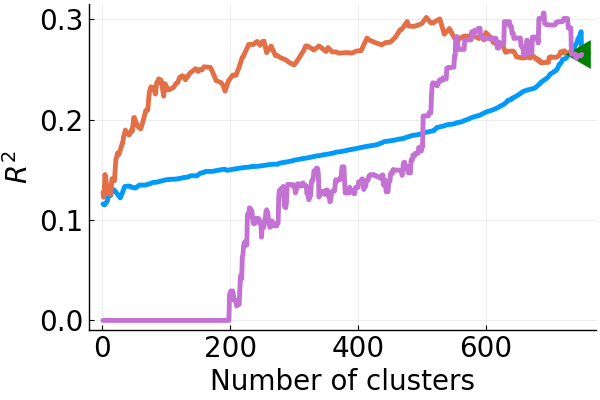}
         \caption{HELOC}
         \label{fig:heloc_MLPC_fidelity}
     \end{subfigure}
     \begin{subfigure}[b]{0.49\textwidth}
         \centering
         \includegraphics[width=0.8\textwidth]{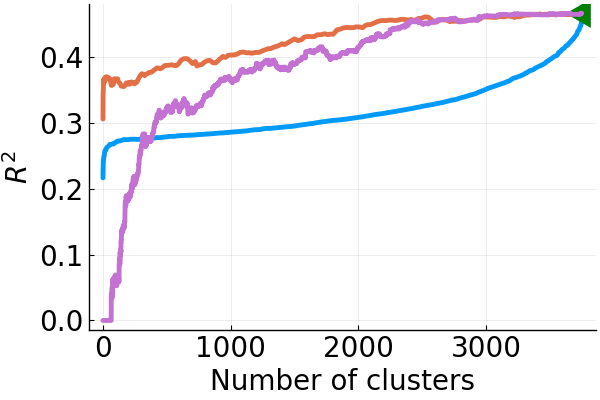}
         \caption{Waveform}
         \label{fig:waveform_MLPC_fidelity}
     \end{subfigure}
     \begin{subfigure}[b]{\textwidth}
         \centering
         \includegraphics[width=0.4\textwidth]{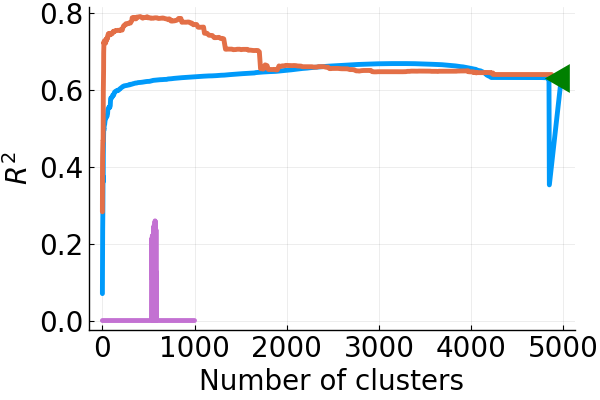}
         \caption{ATIS}
         \label{fig:atis_MLPC_fidelity}
     \end{subfigure}     
        \caption{Generalized fidelity measure for the datasets with MLP classifier. The legend in (a) is applicable to  all figures.}
        \label{fig:generalized_fidelity_MLP}
\end{figure*}

\begin{table}[h]
\begin{center}

\begin{tabular}{|c | c | c | c |}
\hline
\textit{Dataset} & \multicolumn{1}{c|}{LIME} & \multicolumn{1}{c|}{Two Step} & \multicolumn{1}{c|}{MAME} \\
\hline
\hline
\textit{Auto MPG}  & 0.4800                      & 0.4667                       & \textbf{0.5267}                   \\
\textit{Retention} & 0.5158                      & 0.5368                       & \textbf{0.5474}                   \\
\textit{HELOC}     & 0.6623                      & 0.6537                      & \textbf{0.6883}                 \\
\textit{Waveform}  & 0.1905                      & \textbf{0.5333}                       & 0.2952 \\
\textit{ATIS}  & 0.6976                      & 0.7089                       & \textbf{0.7547} \\
\hline
\end{tabular}

\caption{Kendall's rank correlation coefficient between ranks of black-box model feature importance (Random Forest Regressor/Classifier) and ranks of feature importance of the explanation methods.}
\label{tab:feature_rank_correlation}
\end{center}
\end{table}

\subsection{Quantitative Evaluation}
\label{QE}
\noindent\textbf{Quantitative Metrics:} We wish to primarily answer two questions quantitatively: (i) How trustworthy the learned local explanation models are at different levels in the tree? (ii) How faithful are the explanation models to the black-box that they explain?

\noindent a) \emph{Generalized Fidelity:} Here, for every example in the test set we find the closest example based on euclidean distance ($1-$nearest neighbor) in the set that the explanation models were built and use that explanation as a proxy for the test example. We then compute the $R^2$ of the prediction from the linear explanation model with respect to the black-box model's prediction. If this is high, it implies that the local models at that particular level capture the behavior of the black-box model accurately and thus can be trusted.  For our method and Two Step we do this for each level in the tree and compare the levels that have the same number of groups/clusters as the depth of the trees may vary. We also compute this measure for SP-LIME by varying the number of representative explanations from $1$ to the total number of training examples. The closest training example for each test point in this case is chosen from the list of representatives picked by SP-LIME, and the representative linear explanation models are used to compute the $R^2$. We remark that the $1-$nearest neighbor association is meaningful since LIME explanations (which also form the basis for Two Step and MAME) are optimized to work well in the neighborhood of each training data point.

\noindent b) \emph{Feature importance rank correlation:} We compute the feature importances of coefficients with LIME, Two Step and MAME methods. For LIME, the importance score of a feature $j$ is defined as $\sqrt{\sum_{i=1}^n |\theta_{ij}|}$, where $n$ is the number of samples \cite{lime}. For Two Step and MAME, we define the importance score similarly by including all the levels as $\sqrt{\sum_{i=1}^n \sum_{k=1}^l |\theta_{ij}^{(k)}|}$ where $l$ is the number of levels. We then rank the features in descending order of importance scores, and compute rank correlations after similarly ranking the black-box model feature importances. Note that this comparison can be performed only for black-box models that can output feature importance scores.

\subsection{Evaluation on Public Datasets}
\label{sec:eval_public_data}
We demonstrate the proposed methods using the quantitative metrics with several publicly available datasets: \textit{Auto MPG} \cite{uci}, \textit{Retention}\footnote{We generated the data using code in \url{https://github.com/IBM/AIX360/blob/master/aix360/data/ted_data/GenerateData.py}} \cite{aix360-sept-2019}, \textit{Home Line Equity Line of Credit (HELOC)} \cite{FICO}, \textit{Waveform} \cite{uci}, and \textit{Airline Travel Information System (ATIS)} datasets.

The Auto MPG dataset has a continuous outcome variable whereas the rest are discrete outcomes. The (number of examples, number of features) in the datasets are: Auto MPG (392, 7), Retention (1200, 8), HELOC (1000, 22), Waveform (5000, 21), and ATIS (5871, 128). We used randomly chosen 75\% of the data for training the black-box model and the explanation methods, and the rest 25\% for testing. The black-box models trained are Random Forest (RF) Regressor/Classifier, and Multi-Layer Perceptron (MLP) Regressor/Classifier. Only the Auto MPG dataset used a regression black-box, while rest used classification black-box models. The RF models used between 100 and 500 trees, whereas the MLP models have either 3 or 4 hidden layers with 20 to 200 units per layer. The labels for the explanation models are the predictions of the black-box models. For regression black-boxes, these are directly the predictions, whereas for classification black-boxes, these are the predicted probabilities for a specified class.

When running LIME and MAME methods, the neighborhood size $|\mathcal{N}_i|$ in (\ref{eqn:mame}) is set to $10$, neighborhood weights $\psi(x_i, z)$ are set using a Gaussian kernel on $\|x_i-z\|^2_2$ with a automatically tuned width, and the $\alpha_i$ values  in (\ref{eqn:mame}) are set to provide  explanations with $5$ non-zero values when $\beta=0$. 

For Two Step and MAME, the labels $f(x_i) $ are sorted and the $w_{ij}$ (see Section \ref{sec:method}) are set to $1$ whenever $f(x_i)$ and $f(x_j)$ are right next to each other in this sorted list, else $w_{ij} = 0$. This enforces the simple prior knowledge that explanations for similar black-box model predictions must be similar. This prior provided good results with our datasets. For both Two Step and MAME, we used post-processed explanations (see Section \ref{sec:method}). More details on the datasets, the black-box model parameters and performances, the classes chosen for explanation, and the hyper-parameters for the explanation models are available in the appendix.

The generalized fidelity measures for  the datasets for MLP black-boxes is given in Figures \ref{fig:autompgs_MLPR_fidelity} - \ref{fig:atis_MLPC_fidelity}. The MAME method outperforms Two Step and SP-LIME for small number of clusters (which is more important from an explainability standpoint) for the  classification datasets. For \textit{Auto MPG}, Two Step and MAME perform similarly and outperform SP-LIME. However, this dataset is the simplest of the lot. For \textit{ATIS}, SP-LIME took an inordinately long time, so we ran it only for 1 to 1000 representative explanations. Generalized fidelity for RF black-boxes are provided in the appendix.

The feature importance ranks with respect to RF black-boxes are provided in Table \ref{tab:feature_rank_correlation} for the  various datasets. Except for \textit{Waveform}, MAME's feature importance ranks are closer to that of the black-box model. Since feature importances are not output by MLP models, we do not consider those for this experiment.

\section{Conclusion}

In this paper, we have provided a meta explanability approach that can take a local explanability method such as LIME and produce a multi-level explanation tree by jointly learning the explanations with different degrees of cohesion, while at the same time being fidel to the black-box model. We have argued based on recent works as well as through expert and non-expert user studies that such explanations can bring additional insight not conveyed readily by just global or local explanations. We have also shown that when one desires few explanations to understand the model and the data as typically would be the case, our method creates much more fidel explanations compared with other methods. We have also made our algorithm scalable by proposing principled approximations to optimize the objective.

Our current method can build multi-level explanations based on linear or non-linear local explanability methods that have a clear parametrization. In the future, it would be interesting to extend the method to other non-parametric local explanation methods such as contrastive/counterfactual methods. A key there would be to somehow ensure that the perturbations for individual instances are to some degree common across instances that would be grouped together. From an HCI perspective, it would be interesting to see if such a tree facilitates back and forth communication where a user may adaptively want more and more granular explanations.

\small
\bibliography{mainrefs}
\bibliographystyle{IEEEtran}

\normalsize
\appendix

\setcounter{figure}{5} 
\setcounter{equation}{8}

\section{Additional Details on Public Datasets}

\noindent \paragraph{Auto MPG:} This dataset is obtained from \url{https://archive.ics.uci.edu/ml/datasets/Auto+MPG}. The features correspond to the various attributes of a model of a car, and the outcome is miles per gallon. This is the only regression dataset used.

\noindent \paragraph{Retention:} This is a synthetic dataset that is generated using \url{https://github.com/IBM/AIX360/blob/master/aix360/data/ted_data/GenerateData.py}. The features correspond to job position, organization, performance, compensation, and tenure of an employee and the outcome is whether the employee will leave the organization (1) or not (0). We choose to explain class 1.

\noindent \paragraph{HELOC:} This dataset is obtained from \url{https://community.fico.com/s/explainable-machine-learning-challenge?tabset-3158a=2}.
The HELOC dataset had about 10000 instances, and some instances were basically empty and were excluded. From the rest, we used only the first 1000 instances both in the user study (Sections \ref{sec:user_study} and \ref{sec:eval_public_data}) in order to speed up explanation generation. The dataset had two labels - 0 indicating default on loan, and 1 indicating full repayment. We choose to explain class 1. The features correspond to financial health of people who apply for a loan.

\noindent \paragraph{Waveform:} The dataset was obtained from \url{https://archive.ics.uci.edu/ml/datasets/Waveform+Database+Generator+(Version+1)}. The outcome variable is 3 classes of waves (0, 1, 2) and the features are noisy combinations of 2 out of 3 base waves. We choose to explain class 0.

\noindent \paragraph{ATIS:} This dataset contains short texts along with 26 intents (labels) associated to each text, in a air travel information system. This dataset was obtained from \url{https://www.kaggle.com/siddhadev/ms-cntk-atis} and processed using the code provided in \url{https://www.kaggle.com/siddhadev/atis-dataset-from-ms-cntk}. We used the slot filling IOB labels as input features after binarization - any value greater than 0 will be coded as 1. We also removed the last feature (O). The dataset is used for intent classification $26$ intents and we choose to explain the class \textit{flight}. For the purposes of evaluation, we merged the original train and test partitions, and then create train and test partitions using random sampling as described in the main paper.

\section{(Hyper-)Parameter Settings in Experiments}

\noindent \paragraph{Evaluation on Public Datasets and the User Study:} The neighborhood size $|\mathcal{N}_i|$ was set to $10$ when running LIME to generate the leaf explanations. We found this size to be reasonable since larger sizes will make the codes run slower. We also found the default kernel width setting in LIME ($0.75 * \sqrt{p}$) worked well, so we used that. The number of non-zero coefficients in an explanation (a.k.a. explanation complexity) was set at 5, since we found it to be a good number for users to digest multiple explanations. The public datasets were evaluated with only one random split. The training partitions were used to train the black box model and the explanation methods. Results were generated using the test partition.

When running MAME and Two Step, we set the multiplicative step-size $t$ to $1.01$, the initial regularization level for $\beta$, $\epsilon = 1e-10$. We used $10$ conjugate gradient iterations when solving for $\Theta$ in (\ref{eqn:mame_admm_theta}).

\noindent \paragraph{Evaluation for the Expert Study:} The neighborhood size $|\mathcal{N}_i|$ was set to $15$ here. Side information (refer Section 4.1 Study 2 in the main paper) from expert connecting pumps with the same manufacturer type consisted of 2155 edges. The explanation complexity was set at 3  (Figure 2). Figure 1 does not encode any side information and the complexity was set at 4. There were a total of 94 levels in the tree in Figure 1. The remaining parameters were set as mentioned above for the public datasets.

\section{Computing Infrastructure}
We ran the codes in an Ubuntu machine with 64 GB RAM and 32 cores. For the ATIS dataset alone, we used a machine with ~250 GB RAM to avoid memory issues. Both MAME and Two Step were implemented without explicit parallelization, and the parallel operations only happened implicitly in Linear Algebra libraries. The codes were written in Julia 1.3 and also utilized some Python 3.7 libraries (e.g., for RF and MLP model building). 

\section{Note on Two Step method Implementation}
Since the optimization for Two Step method discussed in Section \ref{sec:method} is a special case of that of the MAME method in (\ref{eqn:mame}), we re-purposed the code written for MAME method by re-defining the variables and setting parameters appropriately.

\section{Additional Evaluation on Public Datasets}

The generalized fidelity measures for  the datasets for RF black-boxes is given in Figures \ref{fig:autompgs_RFR_fidelity} - \ref{fig:atis_RFC_fidelity}. Compared to the MLP black-boxes, the first observation we make is that the $R^2$ is much better for RF black-boxes. It seems like MLP models have decision surfaces that are much harder to approximate using linear fits compared to RF models. With RF black-boxes, in the \textit{ATIS} dataset which is fully categorical, MAME still performs better than Two Step and SP-LIME at least for small number of clusters. Similar behavior holds for \textit{Retention} data, which has a lot of categorical features. However, the behavior with \textit{HELOC} and \textit{Waveform} datasets that have only numerical features is mixed. In \textit{Auto MPG}, the results are mixed as well even though it contains categorical features, but again this is a much simpler dataset. A plausible explanation is that when the decision surface of the black-box model is simpler (such as in RF models), just choosing diverse explanations using a method like  SP-LIME is sufficient when the features are fully numerical. However, when the features are categorical or when the decision surfaces are complex (such as in MLP models), a more sophisticated explanation method gives better fidelity. Note that for \textit{ATIS}, SP-LIME took an inordinately long time, so we ran it only for 1 to 1000 representative explanations.

\begin{figure*}[t]
     \centering
     \footnotesize
     \begin{subfigure}[b]{0.49\textwidth}
        \centering
         \includegraphics[width=0.8\textwidth]{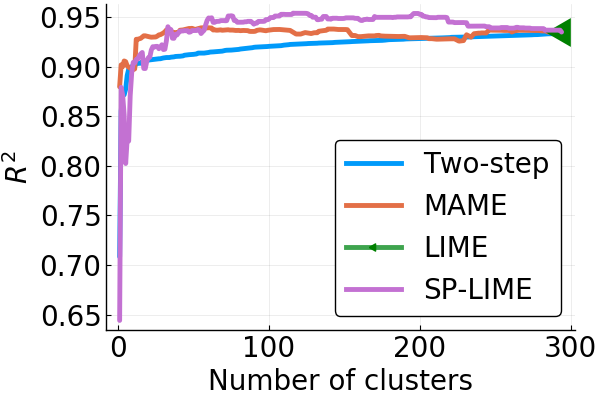}
         \caption{Auto MPG}
         \label{fig:autompgs_RFR_fidelity}
     \end{subfigure}
     \begin{subfigure}[b]{0.49\textwidth}
        \centering
         \includegraphics[width=0.8\textwidth]{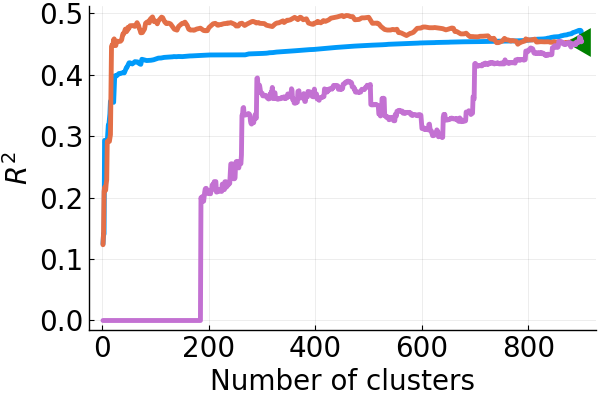}
         \caption{Retention}
         \label{fig:retention_1200_RFC_fidelity}
     \end{subfigure}
     \begin{subfigure}[b]{0.49\textwidth}
         \centering
         \includegraphics[width=0.8\textwidth]{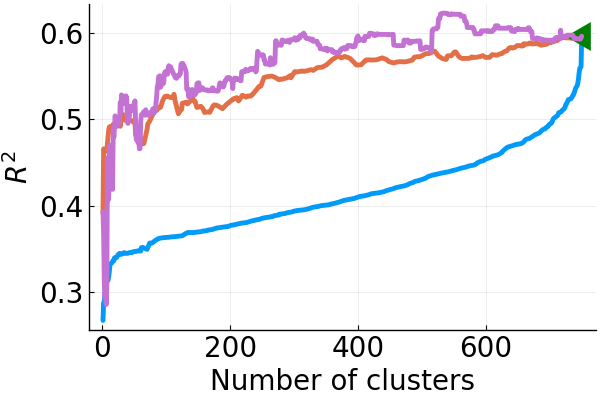}
         \caption{HELOC}
         \label{fig:heloc_RFC_fidelity}
     \end{subfigure}
     \begin{subfigure}[b]{0.49\textwidth}
         \centering
         \includegraphics[width=0.8\textwidth]{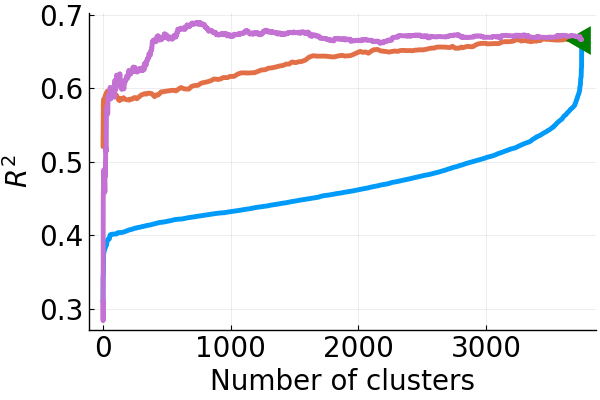}
         \caption{Waveform}
         \label{fig:waveform_RFC_fidelity}
     \end{subfigure}
     \begin{subfigure}[b]{\textwidth}
         \centering
         \includegraphics[width=0.4\textwidth]{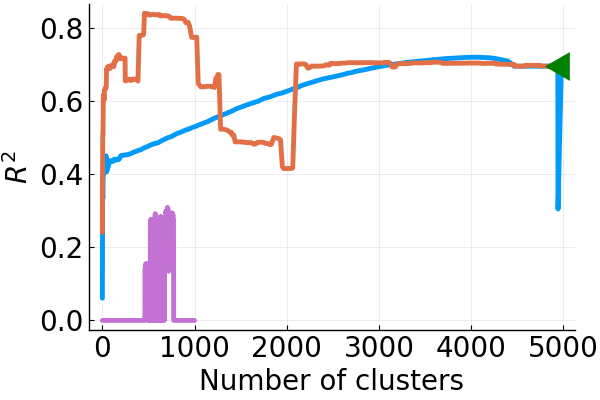}
         \caption{ATIS}
         \label{fig:atis_RFC_fidelity}
     \end{subfigure}     
        \caption{Generalized fidelity measure for the datasets with RF classifier. The legend in (a) is applicable to  all figures.}
        \label{fig:generalized_fidelity_RF}
\end{figure*}

\section{Approximation Quality and Timing Comparisons between the Exact and AR-based methods}
In order to demonstrate the approximation quality between Exact and AR-based solutions, we plot the approximate quality and timing between the two methods. Note that the exact method runs the ADMM iterations in (\ref{eqn:mame_admm_theta})-(\ref{eqn:mame_admm_Z2}) several times for each $\beta$ value until convergence, whereas the AR-based method runs the iterations only once for each $\beta$ value.

We use the \textit{Auto MPG} dataset (both train and test partition together) to obtain a RF regressor black box model and train MAME trees. We set $\epsilon = 1e-10$ and choose  $t$ from $\{1.01, 1.05, 1.1, 1.2, 1.3, 1.4, 1.5\}$. We therefore run the exact and AR-based methods 7 times each, one for each value of $t$. Both the exact and AR-based methods are warm-started using the solution for previous value of $\beta$. The approximation quality is measure by a normalized version of the measure given in Theorem $\ref{thm:mamehausdorffmain}$. The normalization factor that divides the measure is $pn\mu$ where $\mu = \max_{i,j: (i, j) \in \mathcal{E}} \|\theta_i^{(0)} - \theta_j^{(0)} \|_2 $. Note that the superscript $(0)$ indicates that the explanations belong to the leaf nodes.

From Figure \ref{fig:distance_exact_AR}, we see that the exact and approximate solutions get closer as $t \rightarrow 1$ as predicted by the theory. We also see from Figure \ref{fig:time_exact_AR} that the AR-based solution is around 10 times faster to compute than the exact solution for all $t$ values. Both these results demonstrate the utility of the AR-based approximate method.

\begin{figure}[htbp]
%\vspace{-4mm}
\centering
\includegraphics[width=0.5\textwidth]{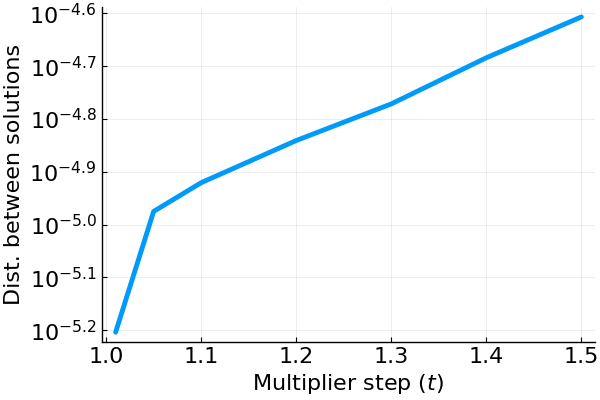}
\caption{Normalized distance between exact and AR-based solutions.}
\label{fig:distance_exact_AR}
%\vspace{-6mm}
\end{figure}

\begin{figure}[htbp]
%\vspace{-4mm}
\centering
\includegraphics[width=0.5\textwidth]{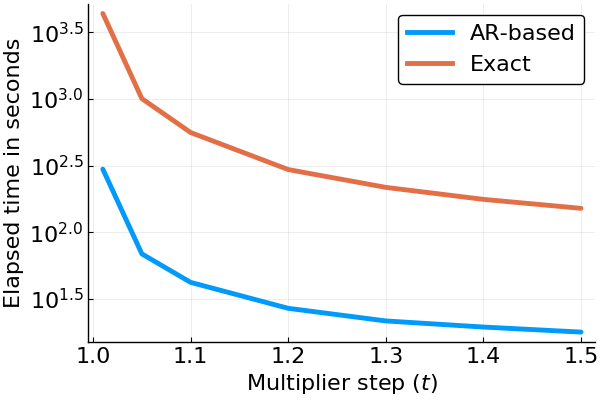}
\caption{Runtime comparison between exact and AR-based solutions.}
\label{fig:time_exact_AR}
%\vspace{-6mm}
\end{figure}

%%%%%%%%%%%%%
\section{Proof Sketch of Lemma 3.1}
\begin{proof}[Proof Sketch]
If $O_p$ denotes the objective in equation \ref{eqn:mame} being optimized at level $p$, then $O_p = O_{p-1}+\Delta_p\left(\sum_{i < j} w_{ij} \|\theta_i - \theta_j\|_2 \right)$, where $\Delta_p=\beta_p-\beta_{p-1}$. We know that $\Delta_p>0$ by design and so we have an added penalty.

If at the optimal of level $p$, $\|\theta_i^{(p)} - \theta_j^{(p)}\|_2 > \|\theta_i^{({p-1})} - \theta_j^{({p-1})}\|_2$ for some $x_i$ and $x_j$, then that would imply that the other two terms in the objective reduce enough to compensate for the added penalty. However, this would imply that $\theta_i^{({p-1})}$ and $\theta_j^{({p-1})}$ were not the optimal solution at level $p-1$ as the current solution would be better given the lesser emphasis on the last term (i.e. lesser $\beta$) at that level. This contradicts our assumption.
\end{proof}

\section{Linear Convergence of ADMM for MAME}
\label{sec:linearadmmmame}
Strong convexity of the objective function ensures linear convergence of the ADMM method in practice. However, in the absence of strong convexity certain additional criteria need to be satisfied to have linear convergence. Hong and Luo in their paper~\cite{hong2017linear} established the global linear convergence of ADMM for minimizing the sum of any number of convex separable functions expressed in the form below. 
\begin{align}
\label{eqn:hongluo2017}
\text{minimze} \quad f(x) = f_1(x_1) + f_2(x_2) + \ldots + f_K(x_K)\\
\text{subject to} \quad Ex=E_1x_1 + E_2x_2 + \ldots + E_Kx_K=q \nonumber \\
x_k \in X_k, k=1,2,\ldots, K \nonumber
\end{align}

The major assumptions imposed by the authors in their paper to prove the linear convergence for the function $f(x)$ are the following

\begin{enumerate}
    \item 
    Each $f_k$ can further be decomposed as $f_k(x_k)$=$g_k(A_kx_k)$ + $h_k(x_k)$ where $g_k$ and $h_k$ are both convex and continuous over their domains.
    \item
    Each $g_k$ is strictly convex and continuously differentiable with a uniform Lipschitz continuous gradient.
    \begin{align}
    \|\nabla A_k^Tg_k(Ax_k) - A_k^T\nabla g_k(Ax^{'}_k) \| \le L \| x_k - x^{'}_k \| \nonumber
    \end{align}
    \item
    Each $h_k$ satisfies either one of the following conditions
    \begin{itemize}
        \item The epigraph of $h_k(x_k)$ is a polyhedral set.
        \item 
        $h_k(x_k) = \lambda_k \|x_k\|_1 + \sum_{J}{w_{J}\|x_{k,J}\|_2}$ where $x_k = (\ldots,x_{k,J},\ldots)$ is a partition of $x_k$ with $J$ being the partition index.
        \item Each $h_k(x_k)$ is the sum of the functions described in the previous two terms.
    \end{itemize}
    \item
    For any fixed and finite y and $\xi$, $\sum_{k}{h_k(x_k)}$ is finite for all $x \in \lbrace x: L(x;y) \le \xi \rbrace \cap X $.
    \item
    Each submatrix $E_k$ has full column rank.
    \item
    The feasible sets $X_k, k=1,\ldots ,K$ are compact polydedral sets.
\end{enumerate}

In Equation~\ref{eqn:hongluo2017}, each $f_k$ is a convex function subject to linear equality constraints. Our original MAME problem in Equation~\ref{eqn:mame_1} can be written in the form below as in Equation~\ref{eqn:mame_hongluo}, which satisfies all the necessary criteria specified above from 1-6. For example, it is known that the epigraph of the $\ell_1$ norm is a polydedral set. The $\mathrm{I}$ identity matrix and the difference matrix $D$ satisfy the full column rank condition also. 

Hong and Luo \cite{hong2017linear} additionally mentions that each $f_k$ may only consist of the convex non-smooth function $h_k$ and the strongly convex part $g_k$ can be absent. This helps us ensure that the two sparsity inducing norm terms in our formulation on $U$ and $V$ do not violate any of the conditions stated. To understand the proof of convergence, we recommend the readers to go through the proof provided in Hong and Luo \cite{hong2017linear}. This completes the proof for linear convergence of ADMM for MAME.
\begin{align}
\label{eqn:mame_hongluo}
\argmin_{\Theta, U, V} &\sum_{i = 1}^n \sum_{z \in \mathcal{N}_i}\psi\left(x_i,z\right) \left(f(z) - g(z)^T \theta_i\right)^2 \\
&+ \alpha_i ||U_{.i}||_1 + \beta \left(\sum_{e_l \in \mathcal{E}} w_l \|V_{.l}\|_2 \right),\nonumber \\
&\text{ such that } \Theta (\mathrm{I} + D) - \mathrm{I}U -\mathrm{I}V=0,\nonumber
\end{align}

\section{Proof of Theorem 3.1 (from main paper)}
\label{app:proof}

In this section we prove Theorem 3.1 on the expectation of difference between exact and AR solutions. We begin with 3 technical lemmas based on the lemmas given in ~\cite{weylandt2019dynamic} which maybe of independent interest: Lemma \ref{lem:q_linear} provides a convergence rate for the optimization step embedded within an iteration; Lemma \ref{lem:mamelip_paths} establishes a form of Lipschitz continuity for convex clustering regularization paths; Lemma \ref{lem:mameerr_bound} provides a global bound for the approximation error induced at any iteration. 
\begin{lemma}[Q-Linear Error Decrease] \label{lem:q_linear}
At each iteration $k$, the approximation error decreases by a
factor $c < 1$ not depending on $t$ or $\epsilon$. That is,
\[\|\Theta^{(k)} - \Theta^{(\beta)}\| < c \left[\|\Theta^{(k-1)} - \Theta^{(\beta)}\|\right]\]
for some $c$ strictly less than 1 where $\beta$ is the computed regularization parameter on the AR path ($\beta=\gamma^{(k)}$).
\end{lemma}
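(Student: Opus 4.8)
The plan is to recognize the $k$-th AR update as a single ADMM iteration for the \emph{fixed} problem at $\beta=\gamma^{(k)}$ and then invoke the global linear convergence of ADMM established in Section~\ref{sec:linearadmmmame}. Concretely, the AR scheme runs the block updates (\ref{eqn:mame_admm_theta})--(\ref{eqn:mame_admm_Z2}) exactly once per regularization level, warm-started at the previous state; holding $\beta=\gamma^{(k)}$ fixed, these five updates compose into one application of an ADMM operator $T_{\gamma^{(k)}}$ acting on the full tuple $(\Theta,U,V,Z_1,Z_2)$. The exact solution $\Theta^{(\beta)}$, together with its optimal auxiliary and dual variables, is precisely the fixed point of $T_{\beta}$ for $\beta=\gamma^{(k)}$. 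Thus $\Theta^{(k)}=T_{\gamma^{(k)}}$ applied to the state underlying $\Theta^{(k-1)}$, and the lemma reduces to a one-step contraction of $T_{\gamma^{(k)}}$ toward its fixed point. Because the ADMM contraction is most naturally expressed in a combined primal--dual metric, I would adopt the convention of \cite{weylandt2019dynamic} that $\|\cdot\|$ denotes this metric (with $\Theta$ as its primal block), so that the stated inequality is literally the per-step contraction; when the minimizer is non-unique the same argument runs with distance to the optimal set, which is what the Hong--Luo machinery actually bounds.

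First I would confirm that the reformulation (\ref{eqn:mame_hongluo}) meets the Hong--Luo hypotheses, which Section~\ref{sec:linearadmmmame} already argues: the data-fidelity term is a quadratic with Lipschitz gradient, the $\ell_1$ penalty on $U$ has polyhedral epigraph, the penalty on $V$ has the admissible $\ell_1$-plus-group-$\ell_2$ form, and the matrices $\mathrm{I}$ and $D$ have full column rank with compact polyhedral feasible sets. By \cite{hong2017linear} this gives a \emph{global} Q-linear rate: each sweep of $T_{\beta}$ shrinks the distance of the current state to the fixed point by a factor $c_\beta<1$, which in the combined metric is exactly $\|\Theta^{(k)}-\Theta^{(\beta)}\|\le c_\beta\,\|\Theta^{(k-1)}-\Theta^{(\beta)}\|$ with $\beta=\gamma^{(k)}$.

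The essential point, and the main obstacle, is to show the factor can be taken \emph{independent of} $t$ and $\epsilon$. Here I would observe that $(t,\epsilon)$ only fix \emph{which} sequence $\gamma^{(k)}=\epsilon\,t^{k}$ is visited; they never enter the per-step map at a fixed $\beta$. Hence it suffices to bound $c_\beta$ uniformly in $\beta$. I would exploit the structure: $\beta$ appears only in the proximal step for $V$, where the threshold $\beta w_l/\rho$ rescales the group soft-thresholding, and a proximal map is firmly nonexpansive regardless of its threshold; meanwhile the quadratic term (its Lipschitz constant and curvature), the augmentation constant $\rho$, and the matrices $\mathrm{I},D$ that drive the contraction are all $\beta$-free. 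Consequently the error-bound modulus underlying the Hong--Luo rate does not depend on $\beta$ through the operator itself, and one may set $c:=\sup_k c_{\gamma^{(k)}}<1$.

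The delicate part of this uniformity argument is that as $\beta$ grows the group penalty fuses more coordinates, so the active manifold of the solution, and hence in principle the \emph{local} error-bound constant, changes along the path. I would control this by confining the iterates and the exact path to a common compact set: the compact polyhedral feasibility assumption supplies boundedness of the states, and the Lipschitz continuity of the regularization path (Lemma~\ref{lem:mamelip_paths}) keeps $\{\Theta^{(\beta)}\}$ bounded, so the supremum of $c_\beta$ is attained over a compact range of $\beta$ and stays strictly below $1$. With this uniform $c$ in hand the stated inequality follows, and it is precisely the per-step bound that Lemma~\ref{lem:mameerr_bound} subsequently accumulates into the global error estimate.
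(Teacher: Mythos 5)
Your proposal follows essentially the same route as the paper: both reduce the claim to the Hong--Luo global linear convergence of ADMM for the reformulation (\ref{eqn:mame_hongluo}), obtain a per-level rate $c_\beta < 1$, and pass to a supremum over the regularization parameter to obtain a $c$ independent of $t$ and $\epsilon$. You are in fact more careful than the paper on the two points it glosses over --- that the contraction is naturally stated in a combined primal--dual metric rather than on $\Theta$ alone, and that $\sup_\beta c_\beta$ could a priori equal $1$ --- so your compactness/uniformity discussion strengthens rather than diverges from the published argument.
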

\begin{proof}
In the notation of ~\cite{hong2017linear}, the constraint matrix for MAME problem from Equation~\ref{eqn:mame_hongluo} is given by $E = \begin{pmatrix} I+D & - I & - I \end{pmatrix}$, for appropriately sized identity matrices, which is clearly row-independent (one of the assumptions mentioned in Section~\ref{sec:linearadmmmame}), yielding linear convergence of the primal and dual variables at a rate $c_{\lambda} < 1$ which may depend on $\lambda$. This follows from the proof sketch for the linear convergence of ADMM for MAME which has been provided in Section~\ref{sec:linearadmmmame} where we show how our formulation satisfies all the assumptions stated in \cite{hong2017linear}. Taking $c = \sup_{\lambda \leq \lambda_{\max}} c_{\lambda}$, we observe that the MAME iterates are uniformly Q-linearly convergent at a rate $c$.
\end{proof}

\begin{lemma}[Lipschitz Continuity of Solution Path] \label{lem:mamelip_paths}

  $\Theta^{(\lambda)}$ is $L$-Lipschitz with
  respect to $\lambda$. That is,
  \[\|\Theta^{(\lambda_1)} - \Theta^{(\lambda_2)}\| \leq L*|\lambda_1 - \lambda_2|\]
  for some $L > 0$.

\end{lemma}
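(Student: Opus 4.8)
The plan is to recognize the lemma as a standard regularization-path stability result and to exploit that the fusion parameter $\lambda$ (which is exactly the parameter $\beta$ multiplying the last term of (\ref{eqn:mame})) multiplies a globally Lipschitz convex penalty, while the remaining part of the objective is strongly convex. Concretely, I would split the objective at a fixed $\lambda$ as $F(\Theta) + \lambda\, G(\Theta)$, where
\[
F(\Theta) = \sum_{i=1}^n \sum_{z \in \mathcal{N}_i}\psi(x_i,z)\bigl(f(z) - g(z)^T \theta_i\bigr)^2 + \sum_i \alpha_i \|\theta_i\|_1, \qquad G(\Theta) = \sum_{i<j} w_{ij}\|\theta_i - \theta_j\|_2 .
\]
The term $F$ is $\mu$-strongly convex with $\mu = 2\min_i \lambda_{\min}\!\bigl(\sum_{z\in\mathcal{N}_i}\psi(x_i,z)\,g(z)g(z)^T\bigr)$, since the loss is a block-separable quadratic in the $\theta_i$ and the added $\ell_1$ term is convex; and $G$ is globally $L_G$-Lipschitz with $L_G \le \sqrt{2}\sum_{i<j} w_{ij}$, because each map $\Theta \mapsto \|\theta_i-\theta_j\|_2$ is $\sqrt{2}$-Lipschitz in the Frobenius norm. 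Crucially, $G$ does not depend on $\lambda$.

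Next I would run the classical ``add two optimality inequalities'' argument. Writing $\Theta_1 := \Theta^{(\lambda_1)}$ and $\Theta_2 := \Theta^{(\lambda_2)}$ for the exact minimizers, strong convexity of $F$ (hence of $F+\lambda_i G$ with modulus $\mu$, as $\lambda_i G$ is convex) gives the two inequalities
\begin{align}
F(\Theta_2) + \lambda_1 G(\Theta_2) &\ge F(\Theta_1) + \lambda_1 G(\Theta_1) + \tfrac{\mu}{2}\|\Theta_1 - \Theta_2\|^2, \nonumber\\
F(\Theta_1) + \lambda_2 G(\Theta_1) &\ge F(\Theta_2) + \lambda_2 G(\Theta_2) + \tfrac{\mu}{2}\|\Theta_1 - \Theta_2\|^2. \nonumber
\end{align}
Adding them cancels all the $F$-values and yields
\[
\mu\,\|\Theta_1 - \Theta_2\|^2 \;\le\; (\lambda_1 - \lambda_2)\bigl(G(\Theta_2) - G(\Theta_1)\bigr).
\]

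Finally I would bound the right-hand side using the global Lipschitz property $|G(\Theta_2)-G(\Theta_1)| \le L_G\|\Theta_1-\Theta_2\|$, so that $\mu\|\Theta_1-\Theta_2\|^2 \le L_G\,|\lambda_1-\lambda_2|\,\|\Theta_1-\Theta_2\|$; dividing by $\|\Theta_1-\Theta_2\|$ (the case $\Theta_1=\Theta_2$ being trivial) gives the claim with $L = L_G/\mu$. The main obstacle is establishing that the strong-convexity modulus $\mu$ is strictly positive: this holds only when each local Gram matrix $\sum_{z\in\mathcal{N}_i}\psi(x_i,z)\,g(z)g(z)^T$ is positive definite, i.e. when the sampled neighborhood $\mathcal{N}_i$ spans $\mathbb{R}^p$ (generically true once $|\mathcal{N}_i|=m\ge p$). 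I would record this non-degeneracy of the local designs as the standing assumption; it is also precisely what guarantees that the exact path $\Theta^{(\lambda)}$ is single-valued, so that the Lipschitz statement is well-posed.
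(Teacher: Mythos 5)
Your argument is correct, but it is a genuinely different route from the paper's. The paper proves the lemma by differentiating the KKT stationarity condition of the vectorized problem with respect to $\lambda$, bounding $\left\|\partial \theta_{\lambda}/\partial \lambda\right\|$ via the boundedness of the (sub)gradients of the norm penalties, and then passing from piecewise to global Lipschitz continuity of the path. You instead use the classical two-point optimality argument: decompose the objective as a $\lambda$-independent, $\mu$-strongly convex part $F$ plus $\lambda$ times a globally $L_G$-Lipschitz convex penalty $G$, add the two strong-convexity inequalities at the respective minimizers to obtain $\mu\|\Theta_1-\Theta_2\|^2 \le (\lambda_1-\lambda_2)\bigl(G(\Theta_2)-G(\Theta_1)\bigr)$, and finish with the Lipschitz bound on $G$. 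Your approach buys an explicit constant $L = L_G/\mu$ and avoids the delicate step of differentiating a subdifferential inclusion along the path (the paper's appeal to almost-everywhere differentiability and the subsequent computation involving $f_p''$ is the weakest point of its proof); the paper's approach, in exchange, does not need to isolate a strongly convex block and in principle extends to penalties that are not globally Lipschitz. Both arguments ultimately rest on the same non-degeneracy: you require each local Gram matrix $\sum_{z\in\mathcal{N}_i}\psi(x_i,z)\,g(z)g(z)^T$ to be positive definite so that $\mu>0$, while the paper's reduction to a $\tfrac{1}{2}\|\theta-u\|_2^2$ data term implicitly assumes the same invertibility of the local designs. One small mismatch worth noting: the paper's vectorized form scales \emph{both} the $\ell_1$ and the fusion penalties by $\lambda$, whereas you scale only the fusion term (which matches the actual algorithm, where the $\alpha_i$ are fixed and only $\beta$ varies); your argument is robust to either convention, since moving $\sum_i \alpha_i\|\theta_i\|_1$ into $G$ preserves both the strong convexity of $F$ and the global Lipschitz continuity of $G$.
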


\begin{proof} 
We first show that $\Theta^{(\lambda)}$ is Lipschitz. The vectorized version of MAME problem can be written as
\[\theta^{(\lambda)} = \argmin_{\theta \in \mathbb{R}^{n \times p}} \frac{1}{2}\|\theta - u\|_2^2 + \lambda f_p(\theta) +  \lambda f_q(\tilde{D}\theta)\]
where $u = \vecop(\psi^{1/2}f(z))$, $\theta = \vecop(\psi^{1/2}g(z)^T\Theta)$, $f_q$ and $f_p$ are convex functions, and $\tilde{D} = I \otimes D$ is a fixed matrix.
The KKT conditions give
\[0 \in \theta_{\lambda} - u + \lambda\partial f_p(\theta_{\lambda}) +  \lambda \tilde{D}^T\partial f_q(\tilde{D}\theta_{\lambda}) \]
where $\partial f_p(\cdot)$, $\partial f_q(\cdot)$ are the subdifferential of $f_p$ and $f_q$. Since both $f_p$ and $f_q$ are
convex, it is differentiable almost everywhere [Theorem 25.5]~\cite{rockafellar1970convex},
so the following holds for almost all $\theta_{\lambda}$:
\[0 = \theta_{\lambda} - u + \lambda f_p'(\theta_{\lambda}) + \lambda \tilde{D}^T f_q'(\tilde{D}\theta_{\lambda}) \]
Differentiating with respect to $\lambda$, we obtain
\begin{align*}
  0 &= \theta_{\lambda} - u + \lambda f_p'(\theta_{\lambda}) + \lambda \tilde{D}^T f_q'(\tilde{D}\theta_{\lambda}) \\
  \frac{\partial}{\partial \lambda}\left[0\right] &= \frac{\partial}{\partial \lambda}\left[\theta_{\lambda} - u + \alpha f_p'(\theta_{\lambda}) + \beta \tilde{D}^T f_q'(\tilde{D}\theta_{\lambda})\right] \\
  0 &= \frac{\partial \theta_{\lambda}}{\partial \lambda} - 0 + \lambda \frac{\partial}{\partial \lambda}\left[f'_p(\theta_{\lambda})\right] + f'_p(\theta_{\lambda}) \\
  &+ \lambda \frac{\partial}{\partial \lambda}\left[\tilde{D}^Tf'_q(\tilde{D}\theta_{\lambda})\right] + \tilde{D}^Tf'_q(\tilde{D}\theta_{\lambda}) \\
  0 &= \frac{\partial \theta_{\lambda}}{\partial \lambda} + \lambda f_p''(\theta_{\lambda})\frac{\partial \theta_{\lambda}}{\partial \lambda} + f'_p(\theta_{\lambda}) \\
  &+\lambda \tilde{D}^Tf_q''(\tilde{D}\theta_{\lambda})\tilde{D}\frac{\partial \theta_{\lambda}}{\partial \lambda} + \tilde{D}^Tf'_q(\tilde{D}\theta_{\lambda}) \\
\implies \frac{\partial \theta}{\partial \lambda} &= -[I + \lambda f''_p(\theta)]^{-1} f'_p(\theta) \\
&-[I + \lambda\tilde{D}^T f''_q(\tilde{D}\theta) \tilde{D}]^{-1} D^Tf'_q(\tilde{D}\theta).
\end{align*}
Note that $\theta_{\lambda}$ depends on $\lambda$ so the chain rule must be used here. From here, we note
\[\left\|\frac{\partial \theta_{\lambda}}{\partial \lambda}\right\|_{\infty}\]

$\leq\| -[I + 0]^{-1} f'_p(\theta_{\lambda})-[I + 0]^{-1} \tilde{D}^Tf'_q(\tilde{D}\theta_{\lambda}) \|_{\infty}$ \\
$= \|f'_p(\theta_{\lambda}) + \tilde{D}^T f'_q(\tilde{D}\theta_{\lambda})\|_{\infty}.$

For the MAME problem, we recall that $f_p(\cdot)$ and $f_q(\cdot)$ are convex norms and hence have bounded gradients; hence $f'_p(\theta_{\lambda})$ and $f'_q(\tilde{D}\theta_{\lambda})$ are bounded so the gradient of the regularization path is bounded and exists almost everywhere. This implies that the regularization path is \emph{piecewise} Lipschitz. Since the solution path is constant for $\lambda \geq \lambda_{\max}$ and is continuous, the solution path is globally Lipschitz with a Lipschitz modulus equal to the maximum of the piecewise Lipschitz moduli.
\end{proof}

\begin{lemma}[Global Error Bound] \label{lem:mameerr_bound}
The following error bound holds for all $k$:
\[\|\Theta^{(k)} - \Theta^{(\beta)}\| \leq c^kL\epsilon + L(t - 1)\epsilon t^k \sum_{i=1}^{k-1} \left(\frac{c}{t}\right)^i\]
\end{lemma}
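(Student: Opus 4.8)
The plan is to convert the two preceding lemmas into a single inhomogeneous linear recurrence for the per-level approximation error and then unroll it into the claimed geometric-type sum. Following the convention of Lemma~\ref{lem:q_linear}, write $\gamma^{(k)} = \epsilon t^k$ for the regularization level attained at iteration $k$ and set $d_k = \|\Theta^{(k)} - \Theta^{(\gamma^{(k)})}\|$, the distance between the AR iterate and the \emph{exact} path evaluated at the same regularization level. Since the single ADMM step producing $\Theta^{(k)}$ from the warm start $\Theta^{(k-1)}$ is run at $\gamma^{(k)}$, Lemma~\ref{lem:q_linear} applies verbatim and gives $d_k \le c\,\|\Theta^{(k-1)} - \Theta^{(\gamma^{(k)})}\|$.

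The difficulty is that the warm start $\Theta^{(k-1)}$ is only known to be close to the exact solution at the \emph{previous} level $\gamma^{(k-1)}$, not at $\gamma^{(k)}$. First I would bridge this gap with the triangle inequality and Lemma~\ref{lem:mamelip_paths}:
\[
\|\Theta^{(k-1)} - \Theta^{(\gamma^{(k)})}\| \le \|\Theta^{(k-1)} - \Theta^{(\gamma^{(k-1)})}\| + \|\Theta^{(\gamma^{(k-1)})} - \Theta^{(\gamma^{(k)})}\| \le d_{k-1} + L\,\big(\gamma^{(k)} - \gamma^{(k-1)}\big).
\]
Because $\gamma^{(k)} - \gamma^{(k-1)} = \epsilon t^{k-1}(t-1)$, combining the two displays yields the recurrence $d_k \le c\,d_{k-1} + cL\epsilon(t-1)t^{k-1}$. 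For the base case I would initialize the AR path at the exact, closed-form solution of the least-penalized problem (the objective decouples over the $\theta_i$), so that Lemma~\ref{lem:mamelip_paths} controls the initial gap to the first level by $d_0 \le L\epsilon$; this contracted initial error is the source of the leading $c^k L\epsilon$ term.

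Finally I would unroll the recurrence (equivalently, induct on $k$). Iterating gives $d_k \le c^k d_0 + cL\epsilon(t-1)\sum_{j=1}^{k} c^{k-j} t^{j-1}$, and the heart of the computation is to re-index the sum into a geometric series in the ratio $c/t < 1$: the identity $c^{k-j}t^{j-1} = t^{k-1}(c/t)^{k-j}$ collapses the sum to $t^{k-1}\sum_{m=0}^{k-1}(c/t)^m$, and absorbing the outer factor $c$ via $c\,(c/t)^m = t\,(c/t)^{m+1}$ re-expresses it as $t^k\sum_{i=1}^{k}(c/t)^i$. Substituting the base-case bound and trimming the terminal $i=k$ summand into the leading constant (using $t-1 \le 1$) then delivers exactly $c^k L\epsilon + L(t-1)\epsilon t^k\sum_{i=1}^{k-1}(c/t)^i$, as claimed. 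The ratio $c/t<1$ guarantees the sum stays finite.

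I expect the main obstacle to lie in the bookkeeping rather than the analysis: keeping straight the off-by-one between the regularization level used to \emph{produce} an iterate and the level against which its error is \emph{measured}, fixing the base case consistently with that indexing, and carrying the $t^{k}$ prefactor through the re-indexing so that the summation range comes out precisely as $\sum_{i=1}^{k-1}$ and the leading constant as $c^k L\epsilon$ rather than a slightly larger or smaller multiple. All of the genuine mathematical content—the uniform contraction factor $c$ from Lemma~\ref{lem:q_linear} and the Lipschitz modulus $L$ from Lemma~\ref{lem:mamelip_paths}—is already in hand, so once the recurrence is set up correctly the remainder is a routine geometric-series manipulation.
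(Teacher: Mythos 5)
Your proposal is correct and is essentially the paper's own argument: the same contraction step from Lemma~\ref{lem:q_linear} at the new level $\gamma^{(k)}$, the same triangle-inequality bridge via the Lipschitz bound of Lemma~\ref{lem:mamelip_paths} on $\|\Theta^{(\gamma^{(k-1)})}-\Theta^{(\gamma^{(k)})}\|\le L\epsilon t^{k-1}(t-1)$, the same base case $d_0\le L\epsilon$, and the same unrolled geometric sum in $c/t$ (the paper just phrases the unrolling as an explicit induction with $k=1,2$ worked out). Note that your honest unrolled bound, $c^kL\epsilon + L\epsilon(t-1)t^k\sum_{i=1}^{k}(c/t)^i = c^kLt\epsilon + L\epsilon(t-1)t^k\sum_{i=1}^{k-1}(c/t)^i$, coincides exactly with what the paper's induction actually produces, so the small factor-of-$t$ slack against the stated constant is present in the paper's proof as well and is immaterial to how the lemma is used in Theorem~\ref{thm:mamehausdorffmain}.
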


\begin{proof}
Our proof proceeds by induction on $k$. First note that, at initialization:
\[\|\Theta^{(0)} - \Theta^{(\epsilon)}\| \leq L\epsilon\]
by Lemma \ref{lem:mamelip_paths}.

Next, at $k=1$, we note that
\[\|\Theta^{(1)} - \Theta^{(t\epsilon)}\| \leq c \|\Theta^{(0)} - \Theta^{(t\epsilon)}\| \]
by Lemma \ref{lem:q_linear}. We now use the triangle inequality to split the right hand side:
\[\|\Theta^{(0)} - \Theta^{(t\epsilon)}\| \leq \underbrace{\|\Theta^{(0)} - \Theta^{(\epsilon)}\|}_{\text{RHS-1}} + \underbrace{\|\Theta^{(\epsilon)} - \Theta^{(t\epsilon)}\|}_{\text{RHS-2}}\]

From above, we have $\text{RHS-1} \leq L\epsilon$. Using Lemma \ref{lem:mamelip_paths},
$\text{RHS-2}$ can be bounded by
\[\|\Theta^{(\epsilon)} - \Theta^{(t\epsilon)}\| \leq L \left|t\epsilon - \epsilon\right| = L(t-1)\epsilon.\]

Putting these together, we get
\[\|\Theta^{(1)} - \Theta^{(t\epsilon)}\| \leq c \left[\text{RHS-1} + \text{RHS-2}\right] \leq c\left[L\epsilon + L(t-1)\epsilon\right] = cLt\epsilon \]
Repeating this argument for $k=2$, we see
\begin{align*}
\|\Theta^{(2)} - \Theta^{(t^2\epsilon)}\| &\leq c \|\Theta^{(1)} - \Theta^{(t^2\epsilon)}\| \\
                                        &\leq c\left[\|\Theta^{(1)} - \Theta^{(t\epsilon)}\| + \|\Theta^{(t\epsilon)} - \Theta^{(t^2\epsilon)}\|\right] \\
                                        &\leq c\left[cLt\epsilon + L\left|t^2\epsilon - t\epsilon\right|\right] \\
                                        &= c^2Lt\epsilon + cL(t-1)\epsilon * t \\
                                        &= c^2Lt\epsilon + L\epsilon(t-1)t^2 * \left(\frac{c}{t}\right) \\
                                        &= c^2Lt\epsilon + L\epsilon(t-1)t^2 * \sum_{i=1}^{k-1} \left(\frac{c}{t}\right)^i
                                        \end{align*}

We use this as a base case for our inductive proof and prove the  general case:
\begin{align*}
\|\Theta^{(k)} - \Theta^{(t^k\epsilon)}\| &\leq c \|\Theta^{(k-1)} - \Theta^{(t^k\epsilon)}\| \\
                                        &\leq c\left[\|\Theta^{(k-1)} - \Theta^{(t^{k-1}\epsilon)}\| + \|\Theta^{(t^{k-1}\epsilon)} - \Theta^{(t^k\epsilon)}\|\right] \\
                            &\leq c\left[c^{k-1}Lt\epsilon + L\epsilon(t-1)t^{k-1} \sum_{i=1}^{k-2} \left(\frac{c}{t}\right)^i + L\left|t^k\epsilon - t^{k-1}\epsilon\right|\right] \\
                                        &= c^{k}Lt\epsilon + cL\epsilon(t-1)t^{k-1}\sum_{i=1}^{k-2}\left(\frac{c}{t}\right)^i + cL\epsilon(t^k - t^{k-1}) \\
                                        &= c^{k}Lt\epsilon + L\epsilon(t-1)t^k\left[\frac{c}{t}\sum_{i=1}^{k-2}\left(\frac{c}{t}\right)^i + \frac{c}{t}\right] \\
                                        &= c^{k}Lt\epsilon + L\epsilon(t-1)t^k\left[\sum_{i=2}^{k-1}\left(\frac{c}{t}\right)^i + \frac{c}{t}\right] \\
                                        &= c^{k}Lt\epsilon + L\epsilon(t-1)t^k\sum_{i=1}^{k-1}\left(\frac{c}{t}\right)^i
                                        \end{align*}

With these results, we are now ready to prove Theorem 3.1
\end{proof}

\begin{proof}
We begin by fixing temporarily $\beta$ and bounding
\[\inf_k \left\|\Theta^{(k)} - \Theta^{(\beta)}\right\|\]
The infimum over all $k$ is less than the distance at any particular $k$,
so it suffices to choose a value of $k$ which gives convergence to 0. Let $\tilde{k}$ be the value of $k$ which
gives the closest value of $\gamma^{(k)}$ to $\beta$ along the AR path; and let
$\tilde{\beta} = \gamma^{(\tilde{k})} = \epsilon t^{\tilde{k}}$. That is,
\[\tilde{k} = \argmin_k |\gamma^{(k)} - \beta| \quad \text{ and } \tilde{\beta} = \gamma^{(\tilde{k})}\]
Then
\[\inf_k \left\|\Theta^{(k)} - \Theta^{(\beta)}\right\| \leq \|\Theta^{(\tilde{k})} - \Theta^{(\beta)}\| \leq \underbrace{\|\Theta^{(\tilde{k})} - \Theta^{\tilde{(\beta)}}\|}_{\text{RHS-1}} + \underbrace{\|\Theta^{\tilde{(\beta)}} - \Theta^{(\beta)}\|}_{\text{RHS-2}}\]

Using Lemma \ref{lem:mamelip_paths}, we can bound $\text{RHS-2}$ as
\begin{align}
\nonumber
\text{RHS-2} &\leq L |\tilde{\beta} - \beta| \leq L | \gamma^{(\tilde{k} + 1)} - \gamma^{(\tilde{k} - 1)}| = L * \epsilon t^{\tilde{k}-1} * [t^2-1] \\
\nonumber
&\leq L * \beta_{\max} * [t^2 -1]
\end{align}

Using Lemma \ref{lem:mameerr_bound}, we can bound $\text{RHS-1}$ as
\begin{align}
\text{RHS-1} & \le c^{\tilde{k}}L\epsilon + L(t-1) * \epsilon t^{\tilde{k}} \sum_{i=1}^{k-1} \left(\frac{c}{t}\right)^i
\nonumber \\
&\leq c^{\tilde{k}}L\epsilon + L(t-1) * \epsilon t^{\tilde{k}} * C \nonumber
\label{eqn:rhs_bound}
\end{align} where $C = \sum_{i=1}^{\infty} \left(\frac{c}{1+t}\right)^i$ is large but finite.
Since $c < 1$ and $\tilde{\beta} = \epsilon t^{\tilde{k}} \leq \beta_{\max}$, we can replace the $k$-dependent
quantities to get 
\[\text{RHS-1} = \|\Theta^{(\tilde{k})} - \Theta^{(\tilde{\beta})}\| \leq L\epsilon + C * L(t-1) * \beta_{\max}\]

Putting these together, we have
\[\inf_{k} \|\Theta^{(k)} - \Theta^{(\beta)}\| \leq \text{RHS-1} + \text{RHS-2}\]
\[\leq L\epsilon + C * L(t-1) * \beta_{\max} + L * \beta_{\max} * [t^2 -1]\]

Similarly by fixing $k$ we know that 
\[\inf_{\beta} \|\Theta^{(k)} - \Theta^{(\beta)}\| \leq \|\Theta^{(k)} - \Theta^{(\tilde{\beta})}\| \]
where 
\[\tilde{\beta} = \argmin_\beta |\log_\gamma \beta - k| \quad \text{ and } \tilde{k} = \log_\gamma \tilde{{\beta}}\]

Using similar arguments as above based on Lemma \ref{lem:mameerr_bound} we can show that 
\[\inf_{\beta} \|\Theta^{(k)} - \Theta^{(\beta)}\| \leq \|\Theta^{(k)} - \Theta^{(\tilde{\beta})}\| \leq L\epsilon + C * L(t-1) * \beta_{\max}\]

We can use these two bounds and plug them in here
\[
\max\Big\{ 
    E_\beta \left(\inf_k \|\Theta^{(k)} - \Theta^{(\beta)} \|\right),
    E_k \left(\inf_\beta \|\Theta^{(k)} - \Theta^{(\beta)} \| \right)
    \Big\}\]
\[\le\max\Big\{ 
    E_\beta \left(L\epsilon + C * L(t-1) * \beta_{\max} + L * \beta_{\max} * [t^2 -1] \right),\]
    \[E_k \left(L\epsilon + C * L(t-1) * \beta_{\max} \| \right)
    \Big\} \xrightarrow{(t, \epsilon) \to (1, 0)} 0\]

One can observe that as $t$, $\epsilon$ $\rightarrow$ to (1, 0) both the expectation terms reduce to 0 individually. Hence proved
\end{proof}

\section{User Study Material}
Figure~\ref{fig:screenshot} is a screenshot of the user study in the LIME condition. The visual explanations show the feature contribution, likelihood of repayment, and feature importance calculated using the SP-LIME method. Below the explanation is a table showing the 22 financial records for a loan applicant. It is divided into two segments. The top segment shows the features used in the visual explanation, which are also the most important features deemed by the explanation algorithm. The bottom segment shows the rest of the features. At the bottom of the page are two questions that the participant had to answer for every trial. The participant could access a written instruction any time during the experiment by clicking the "SEE INSTRUCTIONS" button at the top of the web page.
\begin{figure*}
\centering
\includegraphics[width=0.8\textwidth]{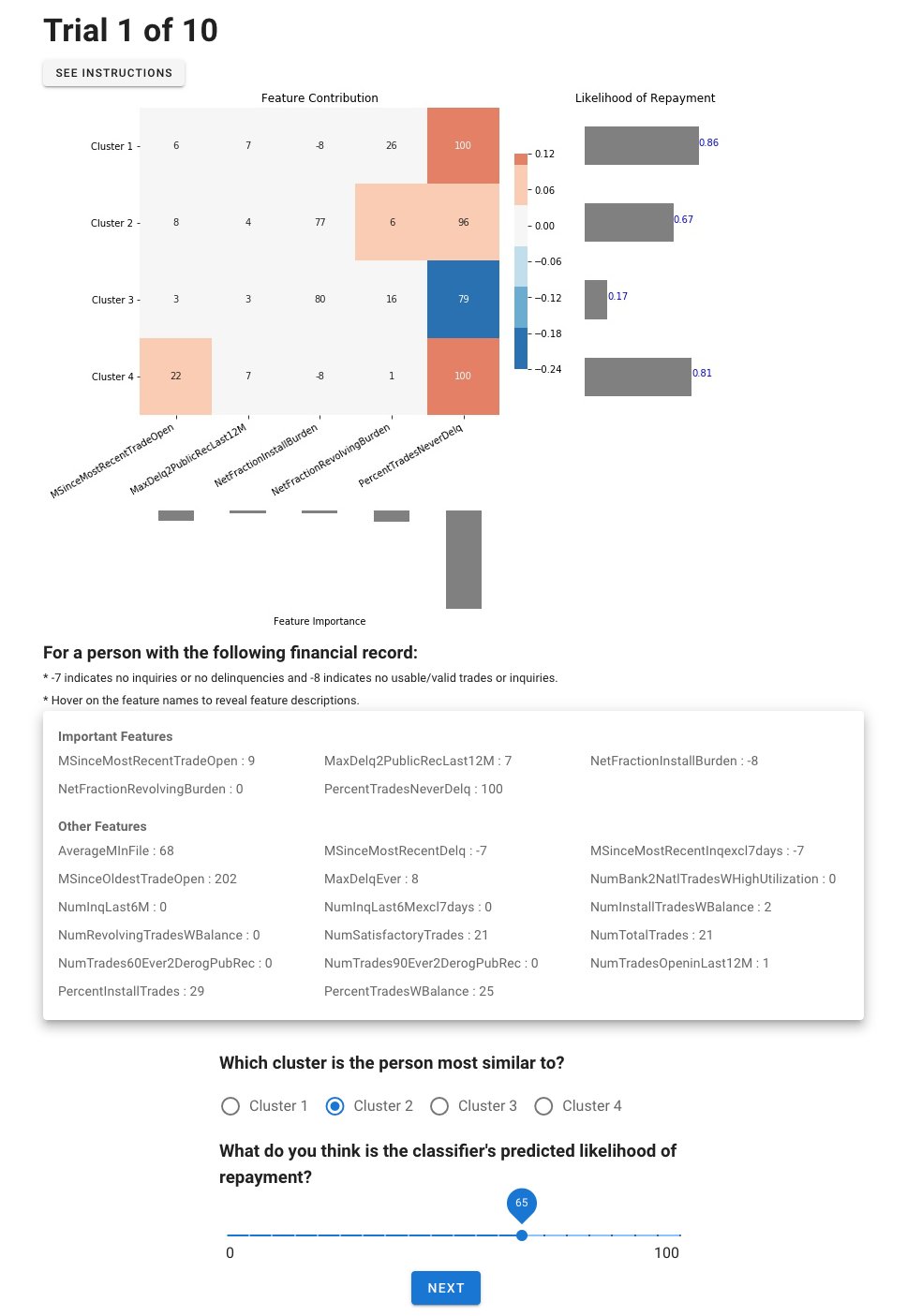}
\caption{A screenshot of the user study in the LIME condition.}
\label{fig:screenshot}
\end{figure*}

\section{Computational Complexity Analysis}
We provide the computational complexity for running one set of iterations given in (\ref{eqn:mame_admm_theta})-(\ref{eqn:mame_admm_Z2}). This is the same as obtaining solutions for one $\beta$ value if we use the AR-based method. Let us consider the five steps individually.

For (\ref{eqn:mame_admm_theta}), we use conjugate gradients to obtain the solution. If we assume the number of edges $|\mathcal{E}| = O(n)$, and the number of CG iterations to be $s$, the dominant complexity of this step is $O(p^2 n s) + O(p n^2 s)$. For (\ref{eqn:mame_admm_U}), the update involves a soft-thresholding step which incurs a complexity of $O(pn)$. The update step (\ref{eqn:mame_admm_V}), similarly incurs a complexity of $O(pn)$, assuming $|\mathcal{E}| = O(n)$. Updates for $Z_1$ and $Z_2$ in (\ref{eqn:mame_admm_Z1}) and (\ref{eqn:mame_admm_Z2}) respectively involve complexities of $O(pn)$ each. If we assume $s$ to be very small (we use $10$ in our experiments), the dominant complexity for one ADMM iteration is hence $O(pn(p+n))$.

% \bibliography{mainrefs}
% \bibliographystyle{icml2020}

\end{document}